\newtheorem{definition}{Definition}
\newtheorem{proposition}{Proposition}
\title{CGS-Mask: Making Time Series Predictions Intuitive for All}
\author{
    Feng Lu\equalcontrib\textsuperscript{\rm 1},
    Wei Li\equalcontrib\textsuperscript{\rm 2},
    Yifei Sun\equalcontrib\textsuperscript{\rm 1},
    Cheng Song\textsuperscript{\rm 1},
    Yufei Ren\textsuperscript{\rm 3},
    Albert Y. Zomaya\textsuperscript{\rm 2}
}
\begin{document}

\maketitle

\begin{abstract}
Artificial intelligence (AI) has immense potential in time series prediction, but most explainable tools have limited capabilities in providing a systematic understanding of important features over time. These tools typically rely on evaluating a single time point, overlook the time ordering of inputs, and neglect the time-sensitive nature of time series applications. These factors make it difficult for users, particularly those without domain knowledge, to comprehend AI model decisions and obtain meaningful explanations. We propose CGS-Mask, a post-hoc and model-agnostic cellular genetic strip mask-based saliency approach to address these challenges. CGS-Mask uses consecutive time steps as a cohesive entity to evaluate the impact of features on the final prediction, providing binary and sustained feature importance scores over time. Our algorithm optimizes the mask population iteratively to obtain the optimal mask in a reasonable time. We evaluated CGS-Mask on synthetic and real-world datasets, and it outperformed state-of-the-art methods in elucidating the importance of features over time.  According to our pilot user study via a questionnaire survey, CGS-Mask is the most effective approach in presenting easily understandable time series prediction results, enabling users to comprehend the decision-making process of AI models with ease.
\end{abstract}

\section{Introduction}

Artificial intelligence (AI) is increasingly being used for time series prediction, particularly in fields like healthcare~\cite{health}, physics~\cite{physics}, energy~\cite{energy}, and sensor data~\cite{sensor}. Ensuring that users can quickly comprehend the reasons behind AI decisions is crucial, particularly in time-sensitive applications where AI is employed to aid in decision-making. Explanation methods are designed to investigate the factors behind a decision and expose any biases or unintended effects of AI models. The transparency and explainability provided by these methods increase human trust in AI models, allowing for more extensive and profound use of AI~\cite{kanamori2020dace, albini2020relation, songcheng}.

Saliency methods aim to explain the importance of input features for predictions, which is crucial for building trust among stakeholders~\cite{adebayo2018sanity,sunyifei}. Perturbation-based methods~\cite{ivanovs2021perturbation} like Feature Occlusion (FO)~\cite{suresh2017clinical} and RISE~\cite{Petsiuk2018rise}, perturb inputs and compare the resulting outcomes to provide explanations. Gradient-based methods like Integrated Gradient~\cite{sundararajan2017axiomatic}, DeepLIFT~\cite{shrikumar2017learning}, and GradSHAP~\cite{lundberg2018explainable} use feature representation gradients for explanation. Attention mechanisms are used with saliency methods~\cite{vaswani2017attention,kwon2018retainvis,gomez2022metrics}, but their efficacy for explaining black-box models remains a topic of debate~\cite{bai2021attentions,wiegreffe-pinter-2019-attention}. SHAP~\cite{lundberg2017unified} uses Shapley values, while LIME~\cite{ribeiro2016should} derives explanation from a random local combination of records and their neighborhoods weighted by proximity.

\begin{figure}[t]
\setlength{\belowcaptionskip}{-20pt}
\centering
\includegraphics[width=0.48\textwidth]{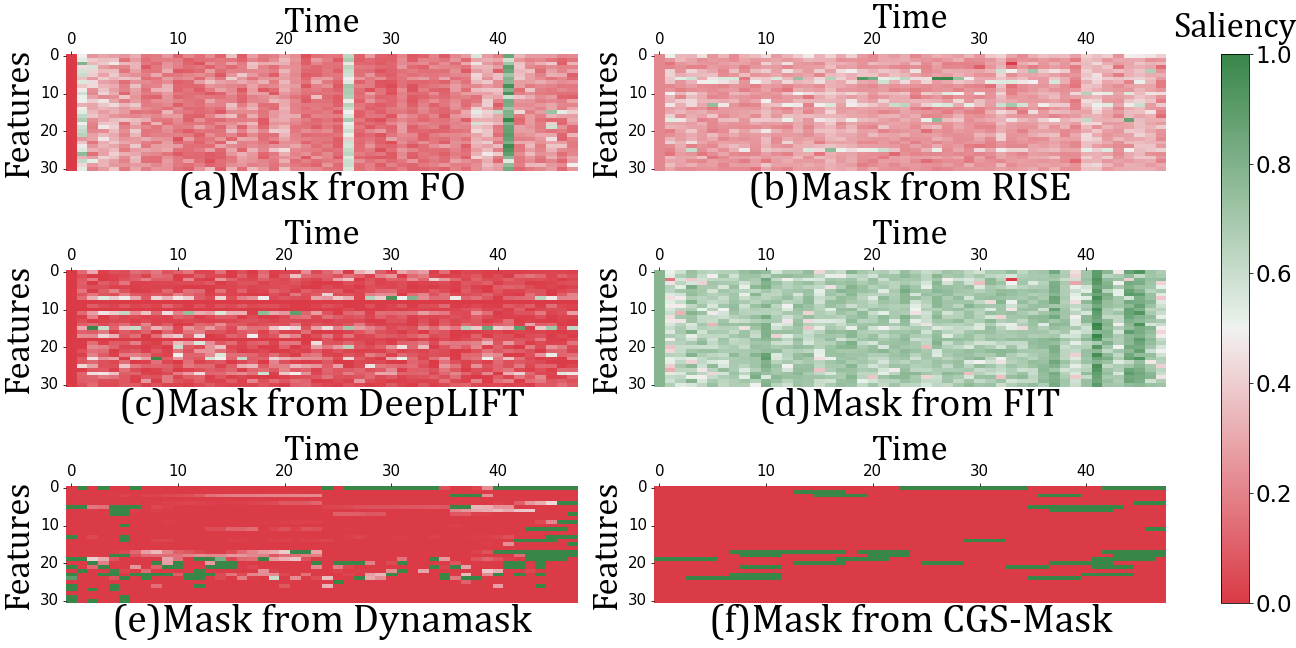} 
\caption{Six explanation masks are employed to analyze data from MIMIC-III, covering the 48 hours preceding a patient's death. In Fig.~\ref{fig1}(f), the green strips reveal the key features indicative of patient outcomes. Specifically, a decline in blood pressure, tachycardia of heart rate, and rapid breathing signal an impending risk of death, thereby enabling timely intervention by doctors. However, the other masks do not distinctly identify the periods and features contributing to this outcome, particularly as observed in Fig.~\ref{fig1}(a)$-$(d).}
\label{fig1}
\end{figure}

Some saliency methods have been implemented as explainable AI tools and worked well on images, text, and tabular data. The direct use of these saliency methods in interpreting time series predictions generally produces results that are hard for humans to comprehend. 
Existing tools are effective at scoring feature importance at a fine time scale, producing saliency maps with precise numeric values for each feature's saliency score over a time horizon, as shown in Fig.~\ref{fig1}(a)$-$(c). However, these fragmented landscapes may not provide meaningful and engaging explanations to users.

Researchers have started designing saliency methods considering the time-sensitive nature of time-series applications. FIT~\cite{tonekaboni2020went} quantifies the importance of observations over time by evaluating their contribution to the prediction output. However, the feature importance is still scored by numeric values and thus leading to the output as shown in Fig.~\ref{fig1}(d), somewhat less intuitive for users. Dynamask~\cite{crabbe2021} considers the time dependency of time series data in design and perturbs the input with a dynamic combination of the adjacent values of any given data in the input. It avoids rapid changes in feature saliency score over time, as shown in Fig.~\ref{fig1}(e), and makes it much more user-friendly than the previous methods. These methods are aware of the continuity of feature importance over time and try to keep it in the explanation~\cite{fong2017interpretable}. Unfortunately, they all score feature importance by numeric values or require other internal network states from the model like gradients. For example, the gradient-based methods are limited to models that provide internal information or have a specific architecture to optimize the final out. Their practicality in real-world applications is constrained~\cite{Petsiuk2018rise}.

To address the above challenges, we propose \emph{\textbf{C}ellular \textbf{G}enetic \textbf{S}trip \textbf{Mask}} (CGS-Mask), a saliency method using the perturbation mechanism to explain \textcolor{black}{multivariate} time series predictions. CGS-Mask utilizes a strip mask approach, treating consecutive time steps as a cohesive entity to evaluate their impact on the final prediction. It scores feature importance as a binary value for better explanation as shown in Fig.~\ref{fig1}(f). We also develop an enhanced cellular genetic algorithm to obtain the final explanation in a reasonable time and make CGS-Mask model agnostic. Our paper has the below contributions:

\begin{enumerate}

\item We consider consecutive time steps as a mask strip to incorporate temporal continuity of features in saliency scoring for AI models used in time series predictions. The resulting feature saliency score is binarized, facilitating clear and intuitive result interpretation.

\item {CGS-Mask is a strictly model-agnostic, self-adaptive metaheuristic approach that accurately identifies salient features in time series applications, relying solely on the input and output without requiring knowledge of AI models' inner workings.}

\item We compare our method to eight state-of-the-art methods on both synthetic and real-world data sets. Results indicate that our method can identify salient features consistently and provide the best intuitive explanations, as demonstrated by the user study.

\end{enumerate}

\section{Preliminaries}
Our mission is to explain the prediction $\mathbf{Y}=f(\mathbf{X})$ of a pre-trained black box model $f$ for the upstream prediction task. In our approach, the input $\mathbf{X}$ is a time series data $\mathbf{X}=(x_{d,t})_{(d,t) \in  [1:D] \times [1:T]} \in \mathbb{R}^{D \times T}$, where $D$ is the number of features with $T$ observations over time, $x_{d,t} \in \mathbb{R}^{D \times T}$ is the data point of feature $d \in [1:D]$ at time $t \in [1:T]$. $\mathbf{Y}=(y_{d_\mathcal{Y},t_\mathcal{Y}})_{d_\mathcal{Y},t_\mathcal{Y} \in [1:D_{\mathcal{Y}}][1:T_{\mathcal{Y}}]} \in \mathbb{R}^{D_{\mathcal{Y}} \times T_{\mathcal{Y}}}$ is the prediction from the black box model $f:\mathbb{R}^{D \times T} \rightarrow\mathbb{R}^{D_{\mathcal{Y}} \times T_{\mathcal{Y}}}$, where $D_{\mathcal{Y}} \times T_{\mathcal{Y}}$ is the size of $\mathbf{Y}$. For instance, in a regression task, where the output is a single value, we have $D_{\mathcal{Y}}=T_{\mathcal{Y}}=1$ and $\mathbf{Y}=y_{1,1}$ represents a real number. 

Our goal is to identify the parts of the input $\mathbf{X}$ that have a significant impact on the predictions made by a pre-trained black box model $f$. Drawing upon prior studies~\cite{fong2017interpretable,fong2019understanding} and established notations ~\cite{crabbe2021,tonekaboni2020went,ismail2020benchmarking}, we use the mask $\mathbf{M}$ to identify the salient features in the input $\mathbf{X}$. The mask is defined as follows: 

\begin{definition} [Mask $\mathbf{M}$] For a given input $\mathbf{X}=(x_{d,t})_{(d,t)\in [1:D]\times[1:T]} \in \mathbb{R}^{D \times T}$ and the black box model $f:\mathbb{R}^{D \times T}\rightarrow\mathbb{R}^{D_{\mathcal{Y}} \times T_{\mathcal{Y}}}$, the corresponding mask is $\mathbf{M}=(m_{d,t})_{(d,t)\in [1:D]\times[1:T]}\in [0,1]^{D\times T}$. The mask $\mathbf{M}$ has the same dimensions as the input $\mathbf{X}$. Each element $m_{d,t}$ in the matrix represents the importance of the feature $d$ at time $t$ to produce the prediction $\mathbf{Y} = f(\mathbf{X})$. In our work, $m_{d,t}$ close to 1 indicates that $x_{d,t}$ is salient, while $m_{d,t}$ close to 0 indicates the opposite.
\label{mask}
\end{definition}

\begin{figure*}[t]
\setlength{\belowcaptionskip}{-10pt}
\centering
\includegraphics[width=0.90\textwidth]{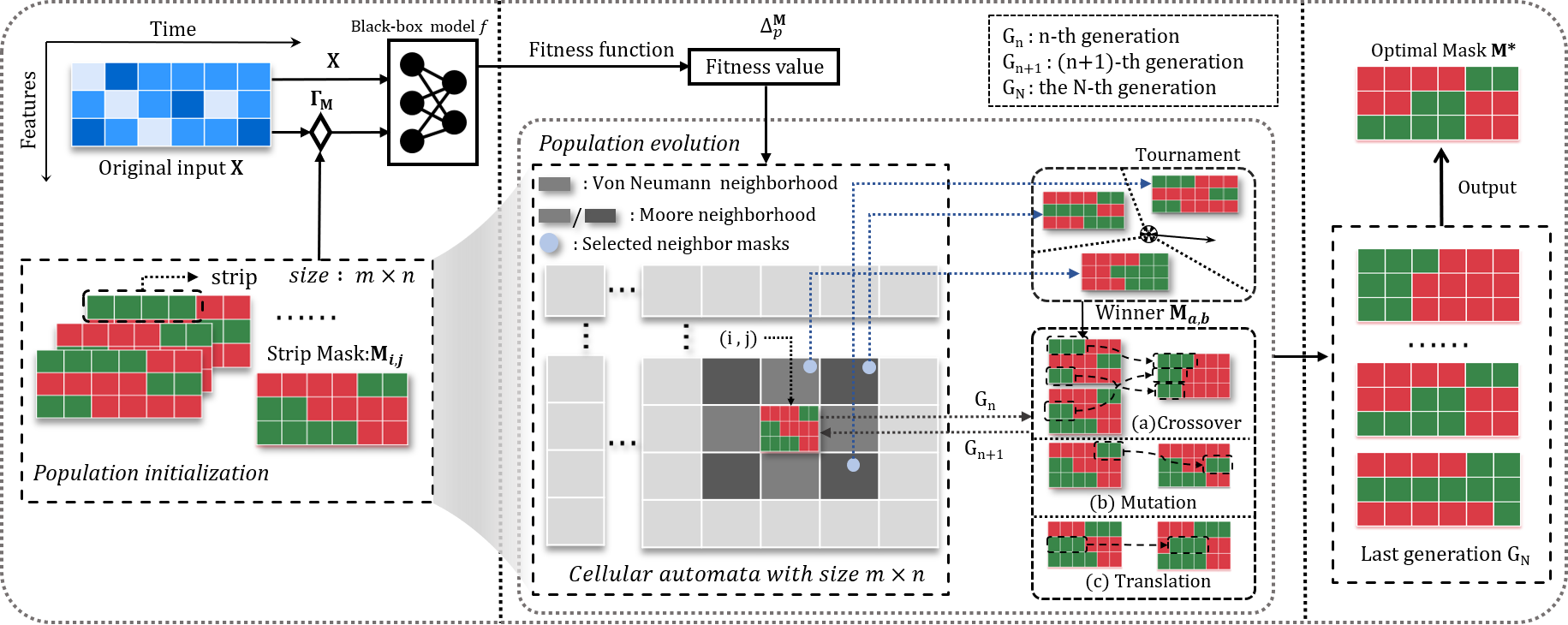} 
\caption{
\textcolor{black}{
The overall framework of CGS-Mask.}}
\label{overall}
\end{figure*}

As a perturbation-based method, it is necessary to determine how the mask $\mathbf{M}$ perturbs the input $\mathbf{X}$. Different applications may require different methods to explore the input data and produce significant perturbations. Inspired by the perturbation operator used in the context of image classification~\cite{fong2019understanding}, we propose our definition as below:

\begin{definition} [Perturbation Operator $\Gamma_\mathbf{M}$] The perturbation operator $\Gamma_\mathbf{M}:\mathbb{R}^{D \times T }\rightarrow\mathbb{R}^{D \times T}$ uses $\mathbf{M}=(m_{d,t})_{(d,t)\in[1:D]\times[1:T]}\in [0,1]^{D\times T}$ to perturb $\mathbf{X}=(x_{d,t})_{(d,t) \in [1:D]\times[1:T]} \in \mathbb{R}^{D \times T} $ and get the perturbed version $\widehat{\mathbf{X}}=\Gamma_{\mathbf{M}}(\mathbf{X})$, where $\widehat{\mathbf{X}} = (\hat{x}_{d,t})_{(d,t)\in [1:D]\times[1:T]} \in \mathbb{R}^{D \times T}$. $\hat{x}_{d,t}\in \widehat{\mathbf{X}}$ can be calculated as follows:
\begin{equation}\label{perturb}
[\Gamma_{\mathbf{M}}(\mathbf{X})]_{d,t} = \hat{x}_{d,t}  =  m_{d,t} \times p_{d,t} + (1-m_{d,t})\times x_{d,t}
\end{equation}
\label{def:2}
\end{definition}

Here, $p_{d,t}$ is the perturbation value used to perturb $x_{d,t}$, $x_{d,t}$ is the value of feature $d$ at the time $t$, and $m_{d,t}$ is the saliency value for $x_{d,t}$. For time series data, $p_{d,t}$ can also be set to some specific values that consider the time dependency of input data~\cite{crabbe2021}. $p_{d,t}$ can thus be calculated in one of the following forms:

\begin{equation}
p_{d,t} = \frac{1}{T}\sum_{t'=1}^{T}x_{d,t'}
\label{perturbway1}
\end{equation}

\begin{equation}
p_{d,t} = \frac{1}{2\mathbf{K}}\big(\sum_{t'=t-\mathbf{K}}^{t-1}x_{d,t'}+\sum_{t'=t+1}^{t+\mathbf{K}}x_{d,t'}\big)
\label{perturbway2}
\end{equation}

Equation (\ref{perturbway1}) represents the perturbation value as the average value of feature $d$. Equation (\ref{perturbway2}) indicates that the perturbation value is the average value of feature $d$ within the time range of $[t-\mathbf{K},t-1]\cup [t+1,t+\mathbf{K}]$. Other suitable forms for specific applications are also acceptable.

Ideally, the objective mask identifies a subset of salient features in the input $\mathbf{X}$ that explain the prediction. Our work will assign a high value to $m_{d,t}$ if $x_{d,t}$ impacts the prediction $\mathbf{Y}$. Specifically, we have the following proposition:

\begin{proposition}
\emph{If $x_{d,t}\in \mathbf{X}$ is salient for $f(\mathbf{X})$, $p_{d,t}$ is not salient for $f(\mathbf{X})$, $m_{d,t}\in \mathbf{M}$ and $m_{d,t}>0$, then $||f(\mathbf{X}) -f(\mathbf{\Gamma_{\mathbf{M}}(\mathbf{X}))}|| > 0$.}
\label{proposition}
\end{proposition}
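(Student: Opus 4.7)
The strategy is to unfold Equation~(\ref{perturb}) at the coordinate $(d,t)$, establish that the perturbed value $\hat{x}_{d,t}$ is genuinely different from $x_{d,t}$, and then appeal to the saliency hypothesis on $x_{d,t}$ to conclude that the prediction must move.

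First, I would substitute the coordinate $(d,t)$ into Equation~(\ref{perturb}) to obtain $\hat{x}_{d,t} - x_{d,t} = m_{d,t}\,(p_{d,t} - x_{d,t})$. Since $m_{d,t} > 0$ by hypothesis, this difference vanishes if and only if $p_{d,t} = x_{d,t}$. The next step is to rule out the equality $p_{d,t} = x_{d,t}$: if the two values coincided, then they would play identical roles as the argument of $f$ in the position $(d,t)$, and in particular they would share the same saliency status, contradicting the hypothesis that $x_{d,t}$ is salient while $p_{d,t}$ is not. Consequently $\hat{x}_{d,t} \neq x_{d,t}$, so the perturbation operator does alter the input at $(d,t)$, while leaving coordinates with $m_{d',t'}=0$ untouched by construction.

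Second, I would invoke the saliency assumption in its operational form: $x_{d,t}$ being salient for $f(\mathbf{X})$ means that replacing it by any non-salient alternative changes the output of $f$. The replacement value $\hat{x}_{d,t} = m_{d,t}\,p_{d,t} + (1-m_{d,t})\,x_{d,t}$ is a convex combination of $x_{d,t}$ with the non-salient $p_{d,t}$, and therefore inherits the non-salient regime (this is explicit in the binary setting $m_{d,t}\in\{0,1\}$ that CGS-Mask ultimately produces, where $\hat{x}_{d,t}=p_{d,t}$). Plugging $\widehat{\mathbf{X}} = \Gamma_{\mathbf{M}}(\mathbf{X})$ into $f$, the saliency of $x_{d,t}$ forces $f(\widehat{\mathbf{X}}) \neq f(\mathbf{X})$, and hence $\lVert f(\mathbf{X}) - f(\Gamma_{\mathbf{M}}(\mathbf{X})) \rVert > 0$.

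The main obstacle is that ``salient'' is used semantically rather than as a sharp arithmetic constraint, so the argument must treat the saliency hypotheses as statements about how $f$ reacts to coordinate-wise edits rather than as algebraic inequalities. The delicate point is justifying that the interpolated value $\hat{x}_{d,t}$ behaves like the non-salient $p_{d,t}$ whenever $0<m_{d,t}<1$; the cleanest route is to restrict attention to the binary masks that CGS-Mask actually outputs, where $\hat{x}_{d,t}\in\{x_{d,t},p_{d,t}\}$ and no interpolation argument is needed. Once that is handled, the remaining two steps above are essentially a direct substitution plus one contrapositive.
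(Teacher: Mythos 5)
Your proposal is correct and follows essentially the same route as the paper's own (very brief) proof: perturbation at $(d,t)$ replaces the salient $x_{d,t}$ with a value governed by the non-salient $p_{d,t}$, and the operational meaning of saliency then forces $f(\Gamma_{\mathbf{M}}(\mathbf{X}))\neq f(\mathbf{X})$. In fact you are more careful than the paper, which tacitly treats $\hat{x}_{d,t}$ as equal to $p_{d,t}$ even when $0<m_{d,t}<1$; your explicit handling of the convex-combination case (or restriction to the binary masks CGS-Mask actually outputs) closes a step the paper leaves implicit.
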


\begin{proof}
The proof of proposition~\ref{proposition} is quite straightforward. Say $x_{d,t}$ is salient for $f(\mathbf{X})$. After the perturbation by Equation (\ref{perturb}), $x_{d,t}$ will change to $\hat{x}_{d,t}$ with the perturbed value $p_{d,t}$, which is not as the same salient as $x_{d,t}$ for $f(\mathbf{X})$. It means that $f(\hat{\mathbf{X}})$ differs from $f(\mathbf{X})$, so we can conclude that $||f(\mathbf{X}) -f(\mathbf{\Gamma_{\mathbf{M}}(\mathbf{X}))}|| > 0$
\end{proof}

According to the above proposition, we remark that the more salient area a mask $\mathbf{M}$ can represent, the bigger the difference between $f(\mathbf{X})$ and $f(\mathbf{\Gamma_{\mathbf{M}}(\mathbf{X}))}$. Therefore, comparing the predictions from both unperturbed and perturbed inputs is helpful. We call the difference of the predictions with perturbed and unperturbed values as \emph{perturbation error $\Delta_{p}^{\mathbf{M}}$} and give a definition as below:

\begin{definition} [Perturbation Error $\Delta_{p}^{\mathbf{M}}$] The perturbation error $\Delta_{p}^{\mathbf{M}}$ evaluates the performance of each potential mask. It can be calculated by $\mathbf{X}$ and its perturbed version $\widehat{\mathbf{X}}=\Gamma_{\mathbf{M}}(\mathbf{X})$. 
The specific calculation of
$\Delta_{p}^{\mathbf{M}}$ varies depending on the prediction task. In the case of a regression task, where $\mathbf{Y} \in \mathbb{R}^{D_{\mathcal{Y}}\times T_{\mathcal{Y}}}$, we use the squared error between the original $\mathbf{X}$ and the perturbed prediction $\mathbf{\hat{X}}$ to derive $\Delta_{p}^{\mathbf{M}}$ as follows:
\begin{equation}
 \Delta_{p}^{\mathbf{M}} = \sum_{d_y=1}^{D_{\mathcal{Y}}}\sum_{t_y=1}^{T_{\mathcal{Y}}} \big([f(\mathbf{X})]_{d_y,t_y}-[f(\Gamma_\mathbf{M}(\mathbf{X}))]_{d_y,t_y}\big)^{2} 
 \label{deltap1}
\end{equation}
To evaluate the impact of the perturbation of input $\mathbf{X}$ on the prediction in classification tasks, we use the cross entropy to derive $\Delta_{p}^{\mathbf{M}}$. Here, $f(\mathbf{X})$ represents the predicted probability by the classifier for $\mathbf{X}$. Then $\Delta_{p}^{\mathbf{M}}$ can be defined as follows:
\begin{equation}
         \Delta_{p}^{\mathbf{M}} =-\sum_{t_{\mathcal{Y}}=1}^{T_{\mathcal{Y}}}\sum_{c=1}^{D_{\mathcal{Y}}}[f(\mathbf{X})]_{t_{\mathcal{Y}},c}\log[f(\Gamma_\mathbf{M}(\mathbf{X}))]_{t_{\mathcal{Y}},c}
\label{deltap2}
\end{equation}
\label{def:3}
\end{definition}

To obtain the optimal mask $\mathbf{M^{*}}$, we aim to maximize $\Delta_{p}^{\mathbf{M}}$. The objective function is thus defined as:
\begin{equation}
\mathbf{M^{*}} = \mathop{\arg\max} \limits_{\mathbf{M} \in [0,1]^{D \times T}}\Delta_{p}^{\mathbf{M}}
\label{object}
\end{equation}

\section{Methodology}

To highlight the continuity of feature importance in the time domain, we first design a strip mask to incorporate the temporal dependency in the results. Then, CGS-Mask uses a modified cellular genetic algorithm to optimize the masks to get the optimal one with the highest $\Delta_{p}^{\mathbf{M}}$.

\subsection{Strip Mask}

As discussed, the mask built by Definition~\ref{mask} captures the feature saliency in the finest time scale but may result in a less coherent narrative. To enhance interpretability, we need a mask that provides a more intuitive explanation of the prediction and is easily understandable at both fine and coarse time scales. Thus, we propose designing a mask that spans consecutive time steps and allows for scaling when necessary. Additionally, relying on numerical ranks to convey the saliency score might not efficiently convey the information to users, as the subtle differences between adjacent numbers can be difficult to interpret adequately. To tackle these, we introduce the concept of 
\textbf{strip} to build our mask. For the strip mask $\mathbf{M}$, the strips represent a subset of input $\mathbf{X}$ that significantly impacts the prediction result of a pre-trained black-box model $f$. The strips are arranged in consecutive time steps, and the values in the mask are binary to enhance its interpretability. We define the strip mask as follows:

\begin{definition} [Strip Mask] For a given input $\mathbf{X}=(x_{d,t})_{(d,t)\in [1:D]\times[1:T]} \in \mathbb{R}^{D \times T}$, the corresponding strip mask $\mathbf{M} = (m_{d,t}) \in \{0, 1\}^{D \times T}$ contains binary values. Each strip mask $\mathbf{M}$ has a strip set $\mathbf{S} = \{S_{n}\ |\ n\in[1,N]\}$, where $N\in \mathbb{N}$ is the number of strips in the mask. To generate a strip $S_{n} \in \mathbf{S}$, we randomly initialize its starting position $b \in [1,T]$, the feature index $d \in [1,D]$, and the strip size $l$. For each point $m_{d,t}$ in the strip, we set $m_{d,t} = 1\ for\ all\ t \in [b,b+l]$.
\end{definition}

Please note that in our design, the strip size $l$ can be set to one, effectively resulting in a traditional time point-based mask. Therefore, in the rest of our work, the terms mask and strip mask are used interchangeably, as traditional masks can be seen as a specific form of our strip mask.

\begin{algorithm}[t]
    \caption{CGS-Mask generation}
    \label{alg:algorithm}
    \textbf{Input}: The black-box prediction model $f$, two-dimensional cellular automata (CA) in size $m\times n$, \textcolor{black}{N rounds of generations}, the probability of crossover operator $P_{c}$, the probability of mutation operator $P_{m}$, and the probability of translation operator $P_{t}$.\\
    \textbf{Output}: The optimal mask $\mathbf{M^{*}}$
    \begin{algorithmic}[1] 
        \STATE Initialize the population of strip masks with size $m\times n$ and put each strip mask into CA, n=0. Ensure $P_{c},P_{m},P_{t}\in [0,1], P_{c}+P_{m}+P_{t}\leq 1$\\
        \WHILE{\textcolor{black}{$n \le N$} }
        \STATE \textcolor{black}{Calculate the fitness value of each mask using the fitness function of Equation (\ref{deltap1}) or (\ref{deltap2}) ;}
        \FOR{each mask $\mathbf{M_{i,j}}$}
        \STATE \textcolor{black}{generate} a random number $r\in [0,1]$;
        \IF {$0\le r \le P_{c}$}
        \STATE $\mathbf{M_{S}}=Select({\mathbf{M_{E}}})$;
        \STATE $\mathbf{M_{a,b}}=Tournament({\mathbf{M_{S}}})$;
        \STATE $Crossover(\mathbf{M_{i,j}, M_{a,b}})$;
        \ELSIF{$P_{c} < r \le P_{c}+P_{m}$}
        \STATE $Mutation(\mathbf{M_{i,j}})$;
        \ELSIF{$P_{c}+P_{m} < r \le P_{c}+P_{m}+P_{t}$}
        \STATE $Translation(\mathbf{M_{i,j}})$;
        \ENDIF
        \ENDFOR
        \STATE n++;
        \ENDWHILE
        \STATE $\mathbf{M^{*}} = Optimal(\{\mathbf{M_{i,j}}|i\in [1,m],j\in [1,n]\}) $;
        \STATE \textbf{return} $\mathbf{M^{*}}$
    \end{algorithmic}
\end{algorithm}

\subsection{Strip Mask Optimization}

\textcolor{black}{Our objective is to generate a mask that can identify a subset of the original input to explain the prediction. This task can be seen as a subset selection problem, which is generally an NP-hard problem~\cite{qian2015subset,natarajan1995sparse}. Metaheuristic algorithms are commonly employed to tackle such subset search problems~\cite{bian2020efficient,rostami2021review}. In contrast to existing models that rely on the internal state of the model, such as gradient-based methods \cite{zhang2018top,simonyan2013deep}, metaheuristic algorithms do not necessitate any internal knowledge of the model, resulting in a model-agnostic interpretation algorithm.}

\textcolor{black}{We present a model-agnostic metaheuristic approach called the self-adaptive cellular genetic algorithm (SA-CGA) to solve the strip search problem. With its help we can optimize the strip as the basic unit of the mask without relying on the internal state of models.} SA-CGA seamlessly combines the genetic algorithm and cellular automata~\cite{li2018cellular}. By treating a single cell as an individual in the population, we can implement the operators of the genetic algorithm alongside the neighbor rule utilized in the cellular automata model. This design enables individuals to gather information from local surroundings, learn from neighboring nodes in a complex system, adapt actively to the environment, improve their fitness value, and ultimately make better decisions.

Based on SA-CGA, we treat each strip mask as an individual and place them into a two-dimensional cellular automata.
We design genetic operators to enable information exchange between neighboring masks in the cellular automata. Our algorithm, called CGS-Mask, is presented in Algorithm~\ref{alg:algorithm}. Initially, we create a population of strip masks and map them into the cellular automata. Then, we optimize each mask to evolve into the next generation using genetic operators such as the crossover, mutation, and translation operators. The optimal mask will be found after N rounds of generations and we select the mask with highest fitness value as the optimal mask.

\subsubsection{Population Initialization}

CGS-Mask randomly initializes the mask set $\{\mathbf{M_{i,j}} = (m_{d,t}^{(i,j)})\in \{0, 1\}^{D \times T}|i\in [1,m],j\in [1,n]\}$. As illustrated in Fig.~\ref{overall}, we map $\mathbf{M_{i,j}}$ to the (i, j) position of the cellular automata with the size of $m\times n$. For each mask, there is a set of neighbor masks $\mathbf{M_{\mathbf{E}}}$. The neighbors of $\mathbf{M_{i,j}}$ can be set in various ways, e.g., the von Neumann neighbor or Moore neighbor shown in Fig.~\ref{overall}.

\begin{table*}[t]
   
    \renewcommand{\arraystretch}{1}
    \setlength{\belowcaptionskip}{-10pt}
    \centering
    \resizebox{1.95\columnwidth}{!}{
    \begin{tabular}{ccccccccc}
        \specialrule{1pt}{0pt}{0pt}
        &  & \multicolumn{2}{c}{rare feature} &  &  & \multicolumn{2}{c}{rare time} &     \\ \hline
        Methods & $AUP\uparrow$ & $AUR\uparrow$ & $\mathcal{D}_{\mathbf{M}}\downarrow$ & $\mathcal{E}_{\mathbf{M}}\downarrow$ & $AUP\uparrow$ & $AUR\uparrow$ & $\mathcal{D}_{\mathbf{M}}\downarrow$ & $\mathcal{E}_{\mathbf{M}}\downarrow$     \\ \hline
        FO & $\mathbf{1.00\pm0.00}$ & $0.14\pm0.02$ & $128.70\pm2.33$ & $11.14\pm0.69$ & $\mathbf{1.00\pm0.00}$ & $0.13\pm0.02$ & $119.45\pm2.48$ & $48.34\pm1.12$  \\ 
        FP & $\mathbf{1.00\pm0.00}$ & $0.16\pm0.03$ & $121.02\pm2.72$ & $14.34\pm0.78$ & $\mathbf{1.00\pm0.00}$ & $0.23\pm0.02$ & $125.76\pm2.76$ & $54.57\pm0.97$  \\ 
        IG & $0.99\pm0.01$ & $0.14\pm0.03$ & $137.98\pm1.29$ & $11.95\pm0.56$ & $0.99\pm0.01$ & $0.17\pm0.03$ & $128.39\pm1.93$ & $48.94\pm0.59$  \\ 
        SVS & $0.99\pm0.01$ & $0.18\pm0.04$ & $141.93\pm2.08$ & $11.18\pm0.63$ & $\mathbf{1.00\pm0.00}$ & $0.19\pm0.03$ & $122.31\pm2.31$ & $48.73\pm0.14$  \\ 
        FIT & $0.98\pm0.01$ & $0.27\pm0.02$ & $138.23\pm2.43$ & $13.76\pm0.52$ & $0.99\pm0.01$ & $0.39\pm0.03$ & $127.45\pm3.02$ & $51.56\pm1.93$  \\ 
        RISE & $0.97\pm0.02$ & $0.35\pm0.02$ & $145.24\pm2.97$ & $14.44\pm0.67$ & $0.96\pm0.01$ & $0.33\pm0.02$ & $129.49\pm1.93$ & $47.44\pm1.45$  \\ 
        DeepLIFT & $0.99\pm0.00$ & $0.33\pm0.03$ & $129.75\pm1.96$ & $12.58\pm0.33$ & $0.99\pm0.01$ & $0.42\pm0.03$ & $131.24\pm2.57$ & $41.53\pm1.89$  \\ 
        Dynamask & $0.99\pm0.01$ & $0.58\pm0.03$ & $82.73\pm1.22$ & $0.83\pm0.02$ & $0.99\pm0.01$ & $0.63\pm0.04$ & $95.45\pm1.19$ & $7.13\pm0.37$  \\ \hline
        w/o Strip & $0.99\pm0.01$ & $0.75\pm0.02$ & $27.29\pm0.92$ & $\mathbf{0.00\pm0.00}$ & $0.98\pm0.01$ & $0.76\pm0.02$ & $79.23\pm1.37$ & $\mathbf{0.00\pm0.00}$  \\ 
        w/o Cell & $0.99\pm0.01$ & $0.73\pm0.02$ & $14.71\pm0.08$ & $\mathbf{0.00\pm0.00}$ & $0.99\pm0.01$ & $0.75\pm0.02$ & $37.83\pm0.89$ & $\mathbf{0.00\pm0.00}$  \\ 
        w/o Trans & $0.98\pm0.01$ & $0.78\pm0.02$ & $15.27\pm0.97$ & $\mathbf{0.00\pm0.00}$ & $0.99\pm0.01$ & $0.78\pm0.03$ & $36.52\pm0.97$ & $\mathbf{0.00\pm0.00}$  \\ 
        CGS-Mask & $\mathbf{1.00\pm0.00}$ & $\mathbf{0.81\pm0.02}$ & $\mathbf{14.34\pm0.93}$ & $\mathbf{0.00\pm0.00}$ & $0.99\pm0.01$ & $\mathbf{0.82\pm0.04}$ & $\mathbf{34.78\pm1.15}$ & $\mathbf{0.00\pm0.00}$  \\ \hline

        &  & \multicolumn{2}{c}{mixture} &  &  & \multicolumn{2}{c}{random} &     \\ \hline
FO & $\mathbf{1.00\pm0.00}$ & $0.18\pm0.03$ & $226.78\pm8.64$ & $55.24\pm2.93$ & $0.99\pm0.00$ & $0.26\pm0.02$ & $222.95\pm4.35$ & $83.56\pm2.32$ \\ 
        FP & $0.99\pm0.00$ & $0.28\pm0.03$ & $223.81\pm5.79$ & $69.33\pm2.33$ & $0.99\pm0.00$ & $0.23\pm0.03$ & $221.42\pm7.32$ & $84.44\pm3.75$ \\ 
        IG & $0.99\pm0.00$ & $0.16\pm0.02$ & $234.52\pm8.93$ & $57.53\pm3.75$ & $\mathbf{1.00\pm0.00}$ & $0.17\pm0.02$ & $237.57\pm6.93$ & $101.16\pm3.72$ \\ 
        SVS & $0.99\pm0.01$ & $0.18\pm0.03$ & $229.40\pm7.64$ & $55.23\pm2.34$ & $\mathbf{1.00\pm0.00}$ & $0.26\pm0.02$ & $236.83\pm3.54$ & $83.58\pm3.38$ \\ 
        FIT & $0.99\pm0.01$ & $0.36\pm0.03$ & $225.73\pm5.39$ & $61.39\pm1.32$ & $0.99\pm0.00$ & $0.32\pm0.03$ & $231.73\pm3.76$ & $87.39\pm2.24$ \\ 
        RISE & $0.96\pm0.01$ & $0.33\pm0.02$ & $289.43\pm6.73$ & $52.44\pm3.44$ & $0.93\pm0.04$ & $0.35\pm0.03$ & $269.37\pm7.39$ & $91.44\pm3.92$ \\ 
        DeepLIFT & $0.98\pm0.01$ & $0.37\pm0.03$ & $231.47\pm3.72$ & $53.72\pm1.35$ & $0.99\pm0.00$ & $0.29\pm0.02$ & $246.52\pm4.39$ & $83.26\pm2.34$ \\ 
        Dynamask & $\mathbf{1.00\pm0.00}$ & $0.59\pm0.02$ & $122.96\pm2.93$ & $7.89\pm0.68$ & $0.99\pm0.00$ & $0.56\pm0.03$ & $145.83\pm3.65$ & $21.08\pm0.83$ \\ \hline
        w/o Strip & $0.98\pm0.01$ & $0.76\pm0.03$ & $104.72\pm2.34$ & $\mathbf{0.00\pm0.00}$ & $0.98\pm0.01$ & $0.75\pm0.03$ & $142.67\pm3.74$ & $\mathbf{0.00\pm0.00}$ \\ 
        w/o Cell & $0.99\pm0.00$ & $0.74\pm0.02$ & $59.34\pm1.73$ & $\mathbf{0.00\pm0.00}$ & $0.96\pm0.01$ & $0.74\pm0.02$ & $71.89\pm3.26$ & $\mathbf{0.00\pm0.00}$ \\ 
        w/o Trans & $0.99\pm0.00$ & $0.77\pm0.03$ & $\mathbf{58.32\pm2.71}$ & $\mathbf{0.00\pm0.00}$ & $0.98\pm0.01$ & $0.77\pm0.03$ & $71.67\pm2.34$ & $\mathbf{0.00\pm0.00}$ \\ 
        CGS-Mask & $0.99\pm0.01$ & $\mathbf{0.82\pm0.03}$ & $58.45\pm2.33$ & $\mathbf{0.00\pm0.00}$ & $0.96\pm0.01$ & $\mathbf{0.79\pm0.03}$ & $\mathbf{69.73\pm1.03}$ & $\mathbf{0.00\pm0.00}$ \\ \hline
         \specialrule{1pt}{0pt}{0pt}
    \end{tabular}}
    \caption{Results on the synthetic data sets.} 
    \label{syntheticData}
\end{table*}

\subsubsection{Population Evolution}
To evaluate the performance of the strip mask, we need to calculate the fitness value for each strip mask $\mathbf{M_{i,j}}$. Our approach leverages the perturbation error $\Delta_{p}^{\mathbf{M}}$, as defined in Definition ~\ref{def:3}, to determine the fitness value of each mask. The fitness function, denoted by Equation (\ref{deltap1}) for regression tasks and Equation (\ref{deltap2}) for classification tasks, is employed to calculate the fitness value.
Next, we need to optimize the original population iteratively. The initial population is denoted as $\mathbf{G_{0}}$, and the $n$-th generation is $\mathbf{G_{n}}$. CGS-Mask aims to optimize $\mathbf{G_{n}}$ to obtain the next generation $\mathbf{G_{n+1}}$. For each mask $\mathbf{M} \in \mathbf{G_{n}}$, we use the crossover operator with probability $P_{c}$, the mutation operator with probability $P_{m}$, and the translation operator with probability $P_{t}$ to generate the new masks that move towards optimality. We denote this process as $\mathbf{M}\longrightarrow \mathbf{M'}$, where $\mathbf{M'}\in \mathbf{G_{n+1}}$.

\paragraph{Crossover Operator:} As mentioned, each mask $\mathbf{M_{i,j}}$ has the neighbor set $\mathbf{M_{E}}$. For example, the Moore neighbor set $\mathbf{M_{E}}$ for $\mathbf{M_{i,j}}$ includes eight neighbors as shown in Fig.\ref{overall}. Then, we select the mask set $\mathbf{M_{S}} \subseteq \mathbf{M_{E}}$ with the predefined probability $P_{c}$. In this case, the fitness value of the $\mathbf{M} \in \mathbf{M_{S}}$ should not be less than that of $\mathbf{M_{i,j}}$. \textcolor{black}{We implement a tournament on $\mathbf{M_{S}}$, and} 
based on the probability calculated from the fitness values of $\mathbf{M} \in \mathbf{M_{S}}$, we choose one of them, denoted as $\mathbf{M_{a,b}} \in \mathbf{M_{S}}$, to perform crossover with $\mathbf{M_{i,j}}$. The probability is defined as follows:

\begin{equation}
P_{a,b} = \frac{\Delta_{p}^{\mathbf{M_{a,b}}}}{\sum\limits_{\mathbf{M_{a',b'}}\in \mathbf{M_{S}}}^{}\Delta_{p}^{\mathbf{M_{a',b'}}}}
\label{pab}
\end{equation}
where $P_{a,b}$ represents the probability of $\mathbf{M_{a,b}}$ winning the \textcolor{black}{tournament} against other masks in $\mathbf{M_{S}}$ to undergo crossover with $\mathbf{M_{i,j}}$.

After obtaining $\mathbf{M_{i,j}}$ and $\mathbf{M_{a,b}}$, the crossover operator will combine them to generate a new mask. In CGS-Mask, the strip is the atomic genetic unit for the crossover operation. The strip offspring can inherit from either parent in the crossover operator. Assuming mask $\mathbf{M_{i,j}}$ has strip set $\mathbf{S_{i,j}}$ and mask $\mathbf{M_{a,b}}$ has $\mathbf{S_{a,b}}$, let $S_{i,j}^{\alpha} \in \mathbf{S_{i,j}}$ and $S_{a,b}^{\beta} \in \mathbf{S_{a,b}}$, where $\alpha,\beta \in [1,U]$ and $U$ is the number of strips for both $\mathbf{S_{i,j}}$ and $\mathbf{S_{a,b}}$. Mask $\mathbf{M_{o}}$ is the offspring of $\mathbf{M_{i,j}}$ and $\mathbf{M_{a,b}}$, and $\mathbf{M_{o}}$ has the strip set $\mathbf{S_{o}}=\bigcup_{\alpha,\beta=1}^{U}{Choose(S_{i,j}^{\alpha},S_{a,b}^{\beta})}$, where

\begin{equation}
P(Choose(S_{i,j}^{\alpha},S_{a,b}^{\beta})=S_{i,j}^{\alpha}) = \frac{\Delta_{p}^{\mathbf{M_{i,j}}}}{\Delta_{p}^{\mathbf{M_{i,j}}}+\Delta_{p}^{\mathbf{M_{a,b}}}}
\label{pchoose}
\end{equation}

The acts of the crossover operator are shown in lines 7-10 in Algorithm~\ref{alg:algorithm} and \textcolor{black}{(a) Crossover in Fig.~\ref{overall}}.

\paragraph{Mutation Operator:} 
The mutation operator in CGS-Mask fosters genetic diversity and prevents masks from converging to local minima. During mutation, a strip in the mask may be replaced by a new strip with a certain probability. 
Let $\mathbf{M}$ represent the mask with strip set $\mathbf{S}$. $S_{i}\in\mathbf{S}$ denotes the strip that will be deleted in the offspring of $\mathbf{M}$, while $S_{j}$ represents the strip that will be added.
Hence, the strip set of the offspring $\mathbf{M_{o}}$ can be calculated as follows:
\begin{equation}
\mathbf{S_{o}} = (\mathbf{S}-\{S_{i}\})\cup \{S_{j}\}
\label{mutation}
\end{equation}

The process of the mutation operator is given in line 11 in Algorithms~\ref{alg:algorithm} and \textcolor{black}{(b) Mutation in Fig.~\ref{overall}}.

\paragraph{Translation Operator:}
In a common scenario, we may encounter a strip \textcolor{black}{$S_{i}$} that shares the same features and length as the true salient strip $S_{t}$ but is misaligned by a few time steps. Simply mutating $S_{i}$ could alter its features and reduce the mask's fitness value. To address this, we propose the translation operator. The translation operator adjusts the position offset of strips on the timeline in CGS-Mask. Given a mask $\mathbf{M}$ with strip set $\mathbf{S}$, we select a strip $S_{i}\in\mathbf{S}$ to be translated in the offspring. By shifting the original beginning $t_{i}$ of $S_{i}$ by a translation distance $t_m$, the new beginning $t_n$ in the offspring is determined as $t_n=t_{i}\pm t_m$. The translation operator is indicated in line 13 of Algorithm~\ref{alg:algorithm} and (c) Translation in Fig.~\ref{overall}. 

After N generations, the last generation $\mathbf{G_{N}}$ is obtained. The mask with the highest fitness value is selected as the output mask $\mathbf{M^{*}}$.

\begin{table*}[t]
    \tabcolsep=3pt
    \renewcommand{\arraystretch}{1}
    \centering
    \resizebox{2\columnwidth}{!}{
    \begin{tabular}{ccccccccccccc}
        \specialrule{1pt}{0pt}{0pt}
                      &     & MIMIC-III &       &   &   LSST     &    &    &   NATOPS &   &  & AE&  \\
        Methods       & $\Delta_{p}^{\mathbf{M}}\uparrow$&$\mathcal{D}_{\mathbf{M}}\downarrow$ & $\mathcal{E}_{\mathbf{M}}\downarrow$& $\Delta_{p}^{\mathbf{M}}\uparrow$&$\mathcal{D}_{\mathbf{M}}\downarrow$ & $\mathcal{E}_{\mathbf{M}}\downarrow$& $\Delta_{p}^{\mathbf{M}}\uparrow$&$\mathcal{D}_{\mathbf{M}}\downarrow$ & $\mathcal{E}_{\mathbf{M}}\downarrow$&
        $\Delta_{p}^{\mathbf{M}}\uparrow$&$\mathcal{D}_{\mathbf{M}}\downarrow$ & $\mathcal{E}_{\mathbf{M}}\downarrow$\\
        \specialrule{1pt}{0pt}{0pt}
            FO        & $0.035\pm0.002$ & $118.63\pm2.34$  & $62.35\pm2.07$ & $1.21\pm0.13$  & $29.75\pm1.23$  & $9.24\pm1.25$ & $1.37\pm0.05$ & $110.77\pm2.34$  & $69.72\pm2.01$ & $106.01\pm3.57$ & $327,65\pm5.31$  & $137.62\pm2.03$ \\
           FP         & $0.034\pm0.002$ & $115.73\pm2.31$  & $77.28\pm2.02$ & $1.33\pm0.11$  & $28.30\pm1.23$  & $10.93\pm1.34$ & $1.63\pm0.04$ & $115.76\pm2.34$  & $63.84\pm1.29$ & $110.23\pm4.87$ & $329.04\pm4.45$  & $131.52\pm5.31$ \\
           IG         & $0.051\pm0.002$ & $128.34\pm2.27$  & $83.94\pm1.93$ & $1.47\pm0.12$  & $29.36\pm2.01$  & $10.96\pm0.97$ & $1.69\pm0.05$ & $114.39\pm3.87$  & $65.60\pm2.36$ & $107.97\pm2.73$ & $332.83\pm7.86$  & $119.74\pm2.37$ \\
           SVS        & $0.031\pm0.001$ & $132.38\pm2.32$  & $61.53\pm1.95$ & $1.29\pm0.11$  & $30.14\pm1.37$  & $10.02\pm0.83$ & $1.72\pm0.04$ & $119.34\pm3.75$  & $63.52\pm2.37$ & $109.09\pm4.93$ & $322.34\pm9.38$  & $121.03\pm3.94$ \\
           FIT        & $0.034\pm0.003$ & $122.53\pm2.97$  & $27.83\pm2.39$ & $1.51\pm0.12$  & $32.35\pm2.31$  & $9.53\pm0.72$ & $1.81\pm0.03$ & $129.38\pm2.71$  & $46.93\pm2.39$ & $112.77\pm4.93$ & $315.14\pm7.34$  & $93.63\pm4.32$  \\
           RISE       & $0.033\pm0.002$ & $119.40\pm3.23$  & $43.44\pm2.34$ & $1.28\pm0.08$  & $29.48\pm1.75$  & $10.77\pm0.93$ & $1.71\pm0.04$ & $113.50\pm2.33$  & $69.77\pm 3.71$ & $106.93\pm3.97$ & $329.49\pm8.34$  & $116.65\pm5.31$ \\
         DeepLIFT     & $0.035\pm0.002$ & $129.83\pm2.56$  & $37.65\pm1.38$ & $1.43\pm0.05$  & $27.39\pm1.92$  & $9.06\pm0.99$ & $1.74\pm0.04$ & $102.30\pm3.40$  & $63.75\pm2.39$ & $116.39\pm4.91$ & $315.37\pm7.84$  & $129.61\pm4.23$  \\
         Dynamask     & $0.072\pm0.003$ & $91.88\pm2.01$   & $0.77\pm0.06$ & $1.77\pm0.09$  & $17.45\pm1.36$  & $0.53\pm0.08$ & $1.89\pm0.06$ & $89.95\pm4.32$   & $0.69\pm0.07$ & $121.15\pm2.93$ & $217.41\pm6.42$  & $2.84\pm0.42$ \\
         \hline
         w/o Strip    & $0.082\pm0.004$ & $83.45\pm2.07$   & $\mathbf{0.00\pm0.00}$ & $1.91\pm0.03$  & $14.34\pm1.03$  & $\mathbf{0.00\pm0.00}$ & $2.02\pm0.08$ & $73.29\pm4.03$   & $\mathbf{0.00\pm0.00}$ & $124.33\pm3.71$ & $164.59\pm5.37$   & $\mathbf{0.00\pm0.00}$  \\
         w/o Cell     & $0.080\pm0.003$ & $46.75\pm1.49$   & $\mathbf{0.00\pm0.00}$ & $1.87\pm0.05$  & $10.64\pm0.93$  & $\mathbf{0.00\pm0.00}$ & $1.99\pm0.05$ & $39.42\pm3.70$   & $\mathbf{0.00\pm0.00}$ & $122.72\pm4.69$ & $99.59\pm4.31$   & $\mathbf{0.00\pm0.00}$  \\
         w/o Trans    & $0.082\pm0.003$ & $47.32\pm1.74$   & $\mathbf{0.00\pm0.00}$ & $1.89\pm0.04$  & $11.13\pm0.89$  & $\mathbf{0.00\pm0.00}$ & $2.09\pm0.07$ & $38.56\pm2.87$   & $\mathbf{0.00\pm0.00}$ & $127.56\pm3.54$ & $101.59\pm3.74$   & $\mathbf{0.00\pm0.00}$  \\
         CGS-Mask     & $\mathbf{0.084\pm0.004}$ & $\mathbf{46.45\pm1.93}$   & $\mathbf{0.00\pm0.00}$ & $\mathbf{1.92\pm0.05}$  & $\mathbf{10.34\pm0.85}$  & $\mathbf{0.00\pm0.00}$ & $\mathbf{2.15\pm0.09}$ &  $\mathbf{38.29\pm2.95}$  & $\mathbf{0.00\pm0.00}$ & $\mathbf{131.24\pm3.72}$ & $\mathbf{97.59\pm3.33}$   & $\mathbf{0.00\pm0.00}$  \\
         \specialrule{1pt}{0pt}{0pt}
    \end{tabular}}
    \caption{Results on the real-world data sets.}
    \label{real-wordData}
\end{table*}

\section{Experiments}
We conducted experiments to evaluate the performance of CGS-Mask and compared it with eight saliency methods on four \emph{synthetic data sets} and four \emph{real-world data sets}. We also conducted a pilot user study to gain insights into how different saliency methods help users understand the decision-making process of AI models.  More details about hardware resources, implementation, metrics, and results can be found in the supplementary material.

\subsection{Experiments with synthetic data sets}

\paragraph{Data sets and settings}
We conducted experiments using synthetic data sets, including the \emph{rare features} and \emph{rare time} data sets from~\cite{ismail2020benchmarking}. The \emph{rare features} data set consists of a small subset of salient features, while the \emph{rare time} data set contains a small subset of salient time points. We also created a \emph{mixture} data set that combines the \emph{rare features} and \emph{rare time} data sets, and a \emph{random} data set with randomly located salient input regions. In the experiments, the prediction model only relies on a known subset area $A=\{a_{p,q}|p\in [1,D],q\in [1,T]\}$ of the input $\mathbf{X}$ and $f$ is a function of $A$ that enables us to derive the ground truth. In our experiments, we set $D=T=50$. For the former two data sets $|A| = 125$, and for the latter two data sets $|A| = 250$. For simplicity, the perturbation value was set to zero. Since all tasks were regression, we evaluated the mask by Equation (\ref{deltap1}). The input $\mathbf{X}$ is generated through ARMA process~\cite{crabbe2021}. 

\paragraph{Metrics} 
As the ground truth explanations were known for the four synthetic data sets, we used the four below metrics to evaluate the performance. The \emph{area under the precision curve} (AUP) is the first metric to measure the proportion of identified features that are indeed salient. We used the \emph{area under the recall curve} (AUR) to measure the portion of the succesful identified salient features. Both metrics are higher the better\textcolor{black}{~\cite{crabbe2021}}. The \emph{Mask Discreteness} ($\mathcal{D}_{\mathbf{M}}$) assesses the temporal continuity of the mask~\cite{mercier2022time}. It quantifies the difference between $m_{t-1,d}$ and $m_{t,d}$ for each feature and time step. Lastly, the \emph{Mask Entropy} ($\mathcal{E}_{\mathbf{M}}$)~\cite{crabbe2021} is to measure the masks' sharpness and legibility. They are lower the better. 

\paragraph{Benchmarks}
We compared our approach CGS-Mask with Dynamask~\cite{crabbe2021}, DeepLIFT~\cite{shrikumar2017learning}, RISE~\cite{Petsiuk2018rise}, FIT~\cite{tonekaboni2020went}, Shapley Value Sampling (SVS)~\cite{molnar2020interpretable}, Feature Occlusion (FO), Feature Permutation (FP)~\cite{fisher2019all}, and Integrated Gradient (IG) in the tests.

\subsubsection{Results and Analysis}

The experiment results on four synthetic data sets are shown in Table~\ref{syntheticData}. We can see that all methods performed well on the AUP metric in determining the salient features. CGS-Mask significantly outperformed all baselines on AUR. The result suggests that CGS-Mask can determine more salient features over time due to its perturbation operator designed to incorporate temporal ordering and the time-sensitive nature of the input. CGS-Mask's $\mathcal{D}_{\mathbf{M}}$ is significantly lower than the baselines, ranging from 52.18\% to 82.67\%. The most consistent explanation over time comes from the strip mask's temporal continuity, which makes the results easy to understand. $\mathcal{E}_{\mathbf{M}}$ of CGS-Mask achieved zero means that its polarization reaches the theoretical minimum. This result is a direct outcome of the strip mask's binary value. In contrast, other methods are multi-valued based and exhibit diversity in feature saliency representation. The runtime of CGS-Mask is comparable to that of other methods, such as Dynamask. Find its specific runtime on each data set in the supplementary material.

\paragraph{Ablation study}

We performed an ablation study on CGS-Mask by a set of component variations to measure the effects of each component in CGS-Mask. Specifically, we considered the following variants of our method:
\begin{itemize}
    \item \textbf{w/o Strip}: The length of strips in CGS-Mask was set to 1, which means that the algorithm considers input saliency on an individual point basis. Therefore, the strip mask is transformed into a regular mask that operates on individual points.
    \item \textbf{w/o Cell}: The standard genetic algorithm replaced the cellular genetic algorithm. 
    \item \textbf{w/o Trans}: The translation operator was excluded from the population evolution module as it is not a standard component of the cellular genetic algorithm. 
\end{itemize}
The ablation study results in Table~\ref{syntheticData} demonstrate the impact of each component on the performance of CGS-Mask. Notably, the absence of the strip component (\textbf{w/o Strip}) significantly increased the value of $\mathcal{D}_{\mathbf{M}}$, suggesting that the strip design contributes to capturing the continuity of features in the explanation results. Additionally, the use of the cellular genetic algorithm and translation operator proved beneficial in recovering salient areas, as observed by the lower AUR performance in the cases of \textbf{w/o Cell} and \textbf{w/o Trans} compared to CGS-Mask.

\subsection{Results on real-world data}

\begin{figure*}
\setlength{\belowcaptionskip}{-15pt}
\begin{minipage}{0.3\textwidth}
\centering
    \includegraphics[width=0.95\columnwidth]{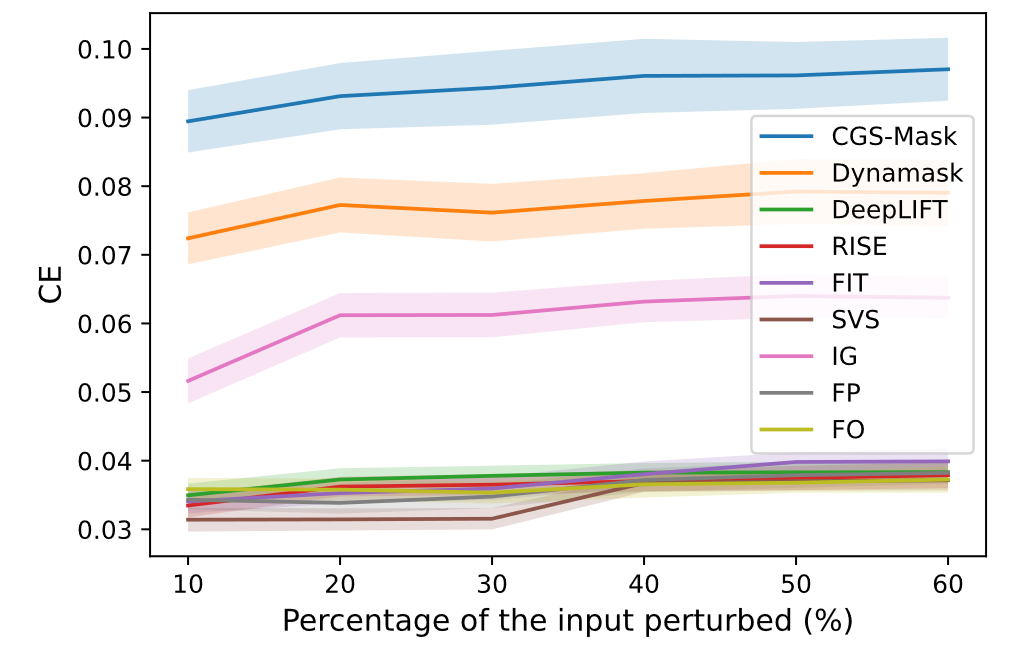}
    \caption{Cross entropy (CE) of saliency methods used in MIMIC-III }
    \label{ce}
\end{minipage}\hfill
\begin{minipage}{0.3\textwidth}
\centering
    \includegraphics[width=0.95\columnwidth]{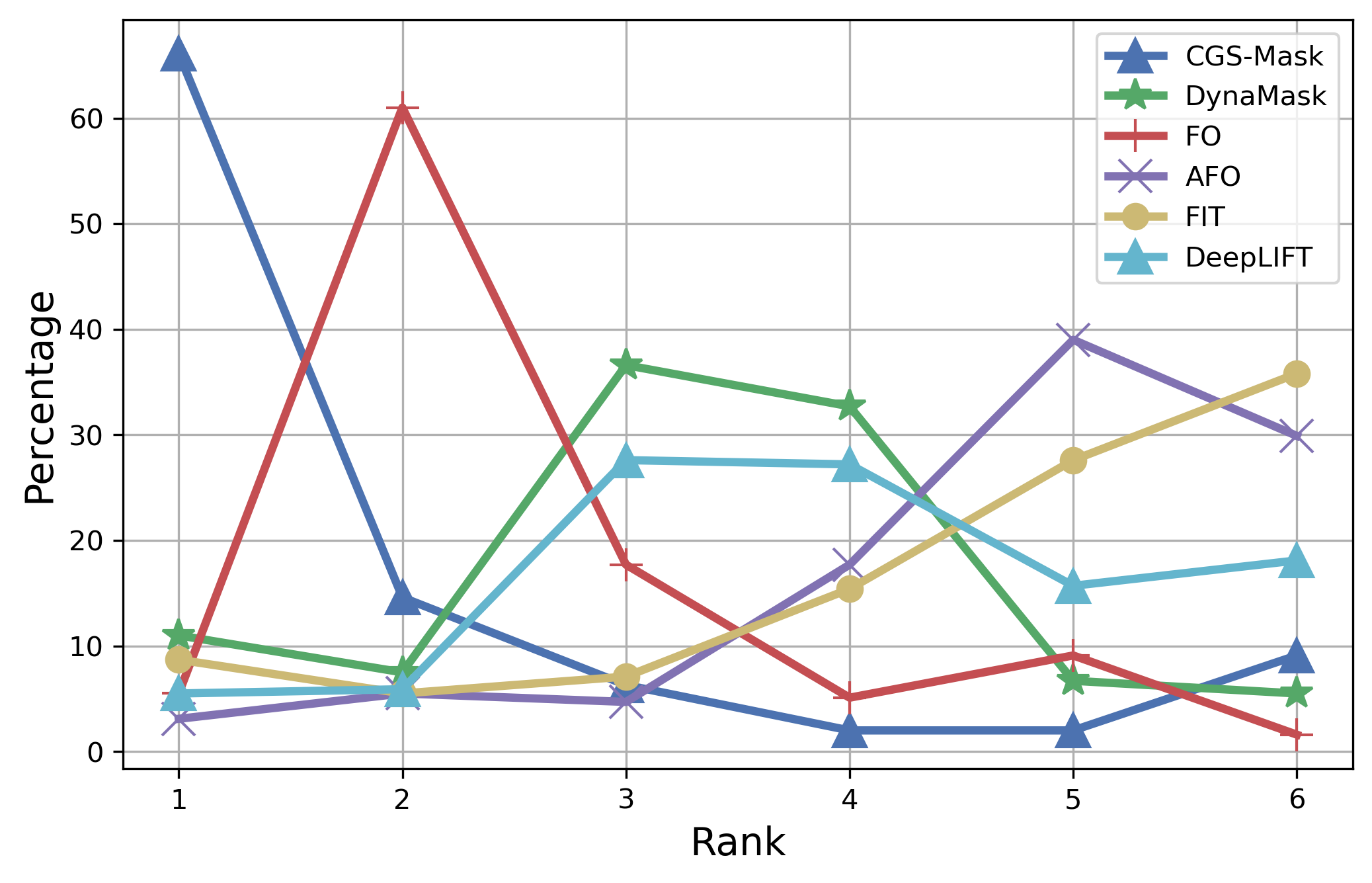} 
    \caption{The user preferences for different saliency masks}
    \label{cognitive}
\end{minipage}\hfill
\begin{minipage}{0.3\textwidth}
\centering
    \includegraphics[width=0.95\columnwidth]{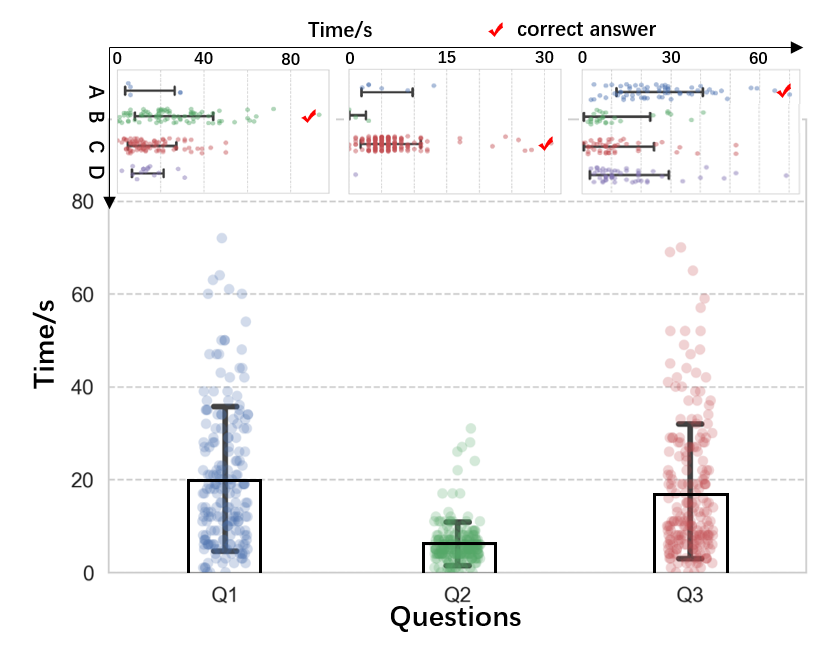} 
    \caption{User response time and selection results. }
    \label{cognitive2}\hfill
\end{minipage}

\end{figure*}

\paragraph{Data sets and settings} 
To further evaluate the performance of CGS-Mask, we conducted experiments on four individual real-world data sets from the healthcare, astronomy, sensors, and energy domains. 
The \emph{MIMIC-III} data set~\cite{johnson2016mimic} contains the health record of 40,000 ICU patients at the Beth Israel Deaconess Medical Center. We used it to predict patient survival rate 48 hours ahead with 31 patient features. It is a classification task with two classes, survival and death. For a fair comparison, the data selection, preprocessing, and model training were the same as ~\cite{tonekaboni2020went}. \emph{LSST} simulated astronomical time-series data in preparation for observations from the Large Synoptic Survey Telescope~\cite{PLAsTiCC-2018}. The prediction model needs to classify these data into 14 different classes. \emph{NATOPS} data set is generated by sensors on the hands, elbows, wrists, and thumbs for gesture recognition~\cite{ghouaiel2017continuous}. These data need to be classified into six different kinds of gestures. \emph{AE} was obtained from the Appliances Energy Prediction data set from the UCI repository to predict the total energy usage of a house~\cite{candanedo2017data}. It is a regression task, and the prediction model outputs a value representing the total energy usage in kWh. We used accuracy to evaluate the classification model's performance for \emph{MIMIC-III}, \emph{LSST}, and \emph{NATOPS}. In contrast, we used Mean Square Error (MSE) to evaluate the regression model's performance for \emph{AE}. 

\paragraph{Metrics}
Since we did not have the real-world data sets' ground truth explanation, we used the perturbation error $\Delta_{p}^{\mathbf{M}}$ to evaluate the performance of each result mask delivered by benchmarks. For \emph{MIMIC-III}, \emph{LSST}, and \emph{NATOPS}, $\Delta_{p}^{\mathbf{M}}$ is the cross entropy (CE) between $f(\mathbf{X})$ and $f(\hat{\mathbf{X}})$. For \emph{AE}, $\Delta_{p}^{\mathbf{M}}$ is the square error between $f(\mathbf{X})$ and $f(\hat{\mathbf{X}})$. We also used the $\mathcal{D}_{\mathbf{M}}$ and $\mathcal{E}_{\mathbf{M}}$ metrics as before.

\paragraph{Benchmarks}
We used the same benchmarks as before.

\subsubsection{Results and Analysis}

\textcolor{black}{For a fair comparison, we evaluated  $\Delta_{p}^{\mathbf{M}}$ of each method by considering saliency regions of 10\%, 20\%, 30\%, 40\%, 50\%, and 60\% of the input on the \emph{MIMIC-III} data set~\cite{crabbe2021,tonekaboni2020went}. The results are shown in Fig.~\ref{ce}. The largest value of CE ($\Delta_{p}^{\mathbf{M}}$) on \emph{MIMIC-III} data set indicates our ability to identify the subset of features with the greatest impact on the models. We also evaluated all methods on diverse real-world datasets, using a saliency region comprising 10\% of the input. The results are given in Table~\ref{real-wordData}. Once again, CGS-Mask outperformed other methods across all metrics. 
The strip mask showed the best continuity as it had the lowest $\mathcal{D}_{\mathbf{M}}$ value. Thanks to the binary nature of CGS-Mask, it attained a perfect $\mathcal{E}_{\mathbf{M}}$ of 0.} The runtime of CGS-Mask remains efficient, and additional details can be found in the supplementary material.

\paragraph{Ablation study}
We conducted an ablation study on real-world data, replicating the previous variations. The results in Table~\ref{real-wordData} confirm the influence of all components on the performance of CGS-Mask. The design of the strip mask directly impacted $\mathcal{D}_{\mathbf{M}}$, while the cellular genetic algorithm and translation operator improved the generation of effective masks for explanation purposes.

\subsection{Pilot User Study}
To assess the legibility of the generated masks, we conducted a survey involving \textcolor{black}{254} participants across different age groups (5 to 83 years old) and with varying levels of domain knowledge. 
The participants were asked to rank the saliency masks obtained from six different methods based on their effectiveness in helping the user understand the salient features and their temporal relevance. Our study showed that users preferred CGS-Mask for its intuitive comprehension of time series models. In Fig.~\ref{cognitive}, over 65\% of users ranked CGS-Mask as their top choice, while over 85\% rated it among their top three. In addition, we conducted a user study to measure the reaction time and accuracy of determining feature importance using three saliency masks (Q1, Q2, and Q3) with four features (A, B, C, and D) over 10 time steps. As shown in Fig~\ref{cognitive2}, CGS-Mask (Q2) exhibited a significantly shorter response time of $6.26\pm4.62$s and higher answer accuracy of 85.4\% compared to the numerical masks (Q1 and Q3), which had a response time of $19.22\pm15.04$s and an accuracy of 40.6\%. Further details of the pilot user study can be found in our Technical Appendix.

\section{Conclusion}

CGS-Mask is a model-agnostic saliency approach that explains time series prediction in an intuitive and user-friendly way. It performs well on synthesis and real-world data and outperforms state-of-the-art solutions. In the future, we aim to demonstrate its applicability in more time series applications, particularly in healthcare, for identifying salient features from medical records to reveal disease occurrence, development, and deterioration. While CGS-Mask's focus on local interpretability is intentional and aligns with our initial goals, we acknowledge that a global explanation is also valuable in some contexts. We are also actively working on enhancing CGS-Mask to incorporate global interpretability while maintaining its effectiveness at the local level.

\bibliography{aaai24}
\clearpage

\appendix

\twocolumn[
\begingroup
\fontsize{30pt}{60pt}\selectfont
\section{Technical Appendix} 
\vspace{40pt} 
\endgroup
]

\noindent \textbf{Supplementary Material: } This supplement file provides extra details on experiments. Please note that the earlier version of this paper has been accepted in another conference as a 2-page student abstract. This submission has extended the previous version extensively and fulfilled with more than 50\% new technical content.

\noindent\textbf{Ethics Declarations:} The author, who works with the children in the questionnaire survey, has completed a police check, possesses a Working with Children Certificate, and is covered by insurance.

\section{Further Details on experiments}
The hardware environment used for the experiments includes an \textit{AMD Ryzen 7 5800H CPU @ 3.20 GHz, Nvidia GeForce RTX(TM) 3050 GPU with 4GB memory}, and a \textit{1TB} hard disk. The software environment used includes \textit{Ubuntu 14.04} and \textit{Python 3.9.7}, along with essential software packages such as \textit{deap 1.3.1, Pytorch 1.11.0, and tsai 0.3.6}.

\subsection{Futher details on synthetic data experiments}
\subsubsection{Data generation}
For a fair comparison, each feature sequence was generated using an ARMA process, following the same procedure as in ~\cite{crabbe2021}. The calculation of each $x_{d,t}\in \mathbf{X}$ is as follows:

\begin{equation}
x_{d,t} = \beta_{1}\times x_{d,t-1} + \beta_{2}\times x_{d,t-2}+\beta_{3}\times x_{d,t-3}+\epsilon
\label{arma}
\end{equation}

Let $\beta_{1} = 0.25, \beta_{2} = 0.1, \beta_{3} = 0.05$ and $\epsilon\sim\mathcal{N}(0, 1)$. We generated one such sequence with $t\in[1 : 50]$ for each feature $i\in[1 : 50]$. In the \emph{rare feature} experiment, 5 features were selected as salient, each spanning 25 consecutive time steps. In the \emph{rare time} experiment, 5 time steps were selected as salient, corresponding to 25 selected features. In the \emph{mixture} experiment, 5 features and 5 time steps were selected as salient. In the \emph{random} experiment, 250 points were selected as salient. The examples of generated data used as the ground truth in our experiments are shown in Fig.~\ref{fig1}(a), Fig.~\ref{fig2}(a), Fig.~\ref{fig3}(a), and Fig.~\ref{fig4}(a).

\subsubsection{Metrics}

\paragraph{AUP, AUR} We used the definitions of AUP and AUR similar to the previous work and notions given in ~\cite{crabbe2021}. Assuming that $\mathbf{C} = (c_{d,t})_{(d,t)\in[1:D]\times[1:T]}\in \{0,1\}^{D\times T}$, $c_{d,t}$ is the true saliency value of $x_{d,t}\in \mathbf{X}$, where $\mathbf{X}\in \mathbb{R}^{D\times T}$ is the input. Specifically, $c_{d,t} = 1$ represents that $x_{d,t}$ is salient while $c_{d,t} = 0$ represents the opposite. For a mask $\mathbf{M} = (m_{d,t})_{(d,t)\in[1:D]\times[1:T]}\in [0,1]^{D\times T}$ adopted by a saliency method, if $\alpha \in (0,1)$ is the threshold for $m_{d,t}$ to express that $x_{d,t}$ is salient, we can convert $\mathbf{M}$ into $\hat{\mathbf{C}}(\alpha) = (\hat{c}_{d,t}(\alpha))_{(d,t)\in [1:D]\times [1:T]}$ where:
\begin{equation}
\hat{c}_{d,t}(\alpha)=\left\{
\begin{array}{cl}
1  &  if\quad m_{d,t} \geq \alpha \\
0  &  else \\
\end{array} \right.
\label{convert}
\end{equation}

For the set $A$, which contains the indices of truly salient elements from $\mathbf{C}$: $A=\{(d,t)\in [1:D]\times [1:T] | c_{d,t} = 1\}$\, and the set $\hat{A}(\alpha)$, which consists of the indices of elements in $\hat{C}(\alpha)$ with salient values of one, defined as  $\hat{A}(\alpha) = \{(d,t)\in [1:D]\times [1:T] | \hat{c}_{d,t}(\alpha) = 1\}$. We can compute the precision and recall curves as follows:

\begin{equation}
P(\alpha) = \frac{|A\cap \hat{A}(\alpha)|}{|\hat{A}(\alpha)|}
\label{precision}
\end{equation}

\begin{equation}
R(\alpha) = \frac{|A\cap \hat{A}(\alpha)|}{|A|}
\label{recall}
\end{equation}

\noindent where $P(\alpha)$ is the precision curve and $R(\alpha)$ is the recall curve. We computed the AUP and AUR scores as the areas under these curves:

\begin{equation}
AUP = \int_{0}^{1}P(\alpha)d\alpha 
\label{aup}
\end{equation}

\begin{equation}
AUR = \int_{0}^{1}R(\alpha)d\alpha
\label{aur}
\end{equation}

\paragraph{Mask Discreteness ($\mathcal{D}_{\mathbf{M}}$)}
$\mathcal{D}_{\mathbf{M}}$ is used to measure the continuity of masks. Firstly, we design the dispersion function as follows:
\begin{equation}
I(m_{d,t},m_{d,t+1})=\left\{
\begin{array}{cl}
1  &  if\quad |m_{d,t}-m_{d,t+1}|>\beta \\
0  &  else \\
\end{array} \right.
\label{descret}
\end{equation}
In our experiment, $\beta = 0.10$. Based on this, $\mathcal{D}_{\mathbf{M}}$ can be calculated as:
\begin{equation}
\mathcal{D}_{\mathbf{M}} =  \sum_{d=1}^{D}\sum_{t=1}^{T-1}I(m_{d,t},m_{d,t+1})
\label{dm}
\end{equation}
\paragraph{Mask Entropy ($\mathcal{E}_{\mathbf{M}}$)}
The mask entropy, as defined in~\cite{crabbe2021}, quantifies the sharpness and legibility of a mask $\mathbf{M}$. It can be computed as follows:

\begin{equation}
    \mathcal{E}_{\mathbf{M}} =  -\sum_{d=1}^{D}\sum_{t=1}^{T}(m_{d,t}\ln(m_{d,t})
    +(1-m_{d,t})\ln(1-m_{d,t}))
\label{em}
\end{equation}
Our objective is to ensure that our mask provides explanations with low mask entropy. Mask entropy is maximized when mask coefficients \textcolor{black}{$m_{d,t}$} are close to 0.5, indicating ambiguity in the saliency of the feature. Conversely, the entropy is minimized when the mask is perfectly binary, with coefficients either 0 or 1. CGS-Mask initializes the mask as binary, allowing it to achieve a theoretical minimum mask entropy of 0.00, ensuring clear and unambiguous explanations.
\subsubsection{Mask generation and optimization} 

To generate random strips $S\in \mathbf{S}$ for each experiment, where $\mathbf{S}$ represents the strip set of the strip mask $\mathbf{M}$, the length $l$ of the strip $S$ was randomly selected within specific ranges.

For the \emph{rare feature} experiments, $l$ ranged from 6 to 10, and the number of strips was set to 14. We found that increasing the number of strips did not significantly improve the fitness value. Similarly, in the \emph{rare time} experiments, $l$ ranged from 3 to 5, and the number of strips was set to 25. In the \emph{mixture} experiments, $l$ ranged from 2 to 8, and the number of strips was set to 45. In the \emph{random} experiments, $l$ ranged from 1 to 6, and the number of strips was set to 55.

To optimize the strip masks using CGS-Mask, we needed to set the parameters in Algorithm 1 described in the main body of the paper. Specifically, we set \textcolor{black}{ $N= 500$, $m=n=10$, $P_{c} = 0.75$, $P_{m}=0.1$, and $P_{t} = 0.1$} for each experiment on synthetic data. Additionally, we used \textcolor{black}{the Moore neighbor} in the experiments.

\subsubsection{\textcolor{black}{ Runtime}}
To optimize runtime, we implemented multi-threading to calculate the fitness value of the population in parallel for each generation in CGS-Mask. Utilizing 8 threads, all reported data points are an average of 500 executions. The average time to find the final explanation mask was 20.6s for the rare feature dataset, 21.3s for the rare time dataset, 19.8s for the mixture dataset, and 24.3s for the random dataset.

\subsubsection{Results} 
The results of the synthetic data experiments are presented in Fig.~\ref{fig1} to Fig.~\ref{fig4} at the end of this appendix. It is evident that CGS-Mask successfully identifies a greater number of truly salient inputs, resulting in higher AUR. Additionally, the figures demonstrate that CGS-Mask produces more intuitive and user-friendly masks compared to other saliency methods.

\subsection{More details on real-world data experiments}
\subsubsection{Data preprocessing}

In the MIMIC-III experiment, we followed the same processing steps as described in~\cite{tonekaboni2020went}, which includes data selection, preprocessing, and model training. The dataset used was the adult ICU admission data from the MIMIC-III dataset, containing de-identified EHRs for approximately 40,000 ICU patients at the Beth Israel Deaconess Medical Center~\cite{johnson2016mimic}. The input features for each patient consisted of three demographic factors (age, gender, ethnicity), eight vital measurements (HeartRate, SysBP, DiasBP, MeanBP, RespRate, SpO2, Glucose, Temp), and twenty lab measurements (ANION GAP, ALBUMIN, BICARBONATE, BILIRUBIN, CREATININE, CHLORIDE, GLUCOSE, HEMATOCRIT, HEMOGLOBIN, LACTATE, MAGNESIUM, PHOSPHATE, PLATELET, POTASSIUM, PTT, INR, PT, SODIUM, BUN, WBC) to predict mortality within the next 48 hours. Patients with missing data for all 48-hour blocks of specific features were excluded. Mean imputation was used for missing vital measurements, while forward imputation was applied for missing lab measurements. The resulting dataset was divided into a training set (65\%), a validation set (15\%), and a test set (20\%). We employed an RNN model with a single layer of 200 GRU cells for training.

The data for the remaining datasets was sourced from~\cite{tsai} and has undergone preprocessing. \textcolor{black}{The InceptionTime model~\cite{ismail2020inceptiontime}, based on a CNN architecture, was selected as the prediction model.} A summary of the four real-world datasets and the corresponding model $f$ for each upstream prediction task can be found in Table~\ref{summaryData}.

\begin{table}[t]
    \renewcommand{\arraystretch}{1.2}
    \setlength{\abovecaptionskip}{10pt}
    \centering
    \resizebox{0.95\columnwidth}{!}{
    \begin{tabular}{ccccc}
        \specialrule{1pt}{0pt}{0pt}
          &MIMIC-III & LSST  & NATOPS & AE\\
        \specialrule{1pt}{0pt}{0pt}
        Number of  Samples inTrain Size          & 18392  & 2459    & 180  & 95  \\
        Number of Samples in Test Size            & 4598   & 2466    & 180  & 42  \\
        Number of time steps        &48      & 36      & 51   & 144  \\
        Number of features    &31      & 6       & 24   & 24\\
        Architecture of $f$        &RNN     & CNN     & CNN  & CNN\\
        Evaluation of $f$   &0.79    & 0.71    & 0.95 & 2.49\\
         \specialrule{1pt}{0pt}{0pt}
    \end{tabular}}
    \caption[short title]{Summary for the real-world data sets and the results.}
    \label{summaryData}
\end{table}

\subsubsection{Metrics}
\paragraph{$\mathbf{\Delta_{p}^{M}}$}Consider a classifier $f$ that maps the input $\mathbf{X}$ to $f(\mathbf{X}) = \mathbf{P} = [P_{1},P_{2},...P_{C}]$, where $P_{i}$ is the probability of $\mathbf{X}$ being class $i$. We define the class function as follows:
\begin{equation}
Class(P_{i})=\left\{
\begin{array}{cl}
1  &  P_{i} = max(\mathbf{P}) \\
0  &  else \\
\end{array} \right.
\label{class}
\end{equation}
For a data set with $N$ samples, to measure the shift in the classifier’s prediction caused by the perturbation of the input, we use the binary cross-entropy:

\begin{equation}
    \begin{split}
        CE = -\frac{1}{N}\sum_{n=1}^{N}\sum_{i=1}^{C} Class [f(\mathbf{X}_{n})_{i}]\times \log f(\widehat{\mathbf{X}_{n}})_{i}
    \end{split}
\label{ce}
\end{equation}

Since \emph{MIMIC-III}, \emph{LSST}, and \emph{NATOPS} are all classification task data sets, we used Equation~\ref{ce} on them. For \emph{AE}, we used mean square error to measure the performance of masks.

For a fair comparison, when comparing $\Delta_{p}^{\mathbf{M}}$, each saliency method selected 10\% most important inputs as saliency regions, and the saliency values were all set to one. \textcolor{black}{For \emph{MIMIC-III} data set, we further investigated saliency regions by considering different percentages of the input, specifically 10\%, 20\%, 30\%, 40\%, 50\%, and 60\%. In each case, the saliency values for these regions were uniformly set to one.} 

\paragraph{$\mathcal{D}_{\mathbf{M}}$ and $\mathcal{E}_{\mathbf{M}}$} 

We preserved the original saliency value of the 10\% selected regions adopted by saliency methods and compared their $\mathcal{D}_{\mathbf{M}}$ and $\mathcal{E}_{\mathbf{M}}$. 
$\mathcal{D}_{\mathbf{M}}$ and $\mathcal{E}_{\mathbf{M}}$ calculation followed the same Equation (\ref{dm}) and Equation (\ref{em}).

\subsubsection{Mask generation and optimization}

For CGS-Mask, we need to set the length range of strips and the number of strips according to the input data size. Specifically, for \emph{MIMIC-III}, the strip length $l\in[2, 6]$, the number of strips was set to 35, and the number of points covered by the strips was 149. For \emph{LSST}, the strip length $l\in[2, 4]$, the number of strips was set to 7, and the number of points covered by the strips was 23. For \emph{NATOPS}, the strip length $l\in[2, 4]$, the number of strips was set to 40, and the number of points covered by the strips was 123. For \emph{AE}, the strip length $l\in[2, 4]$, the number of strips was set to 115, and the number of points covered by the strips was 346. We still need to set the parameters in Algorithm 1 described in the main body of our paper to optimize the strip masks. Specifically, for \emph{MIMIC-III}, we set \textcolor{black}{$N = 100$, $m=n=10$, $P_{c} = 0.7$, $P_{m}=0.1$, $P_{t} = 0.1$.} For \emph{LSST}, we set \textcolor{black}{$N = 100$, $m=n=10$, $P_{c} = 0.7$, $P_{m}=0.1$, $P_{t} = 0.1$.} For \emph{NATOPS}, we set \textcolor{black}{$N = 200$, $m=n=10$, $P_{c} = 0.7$, $P_{m}=0.1$, $P_{t} = 0.1$.} For \emph{AE}, we set \textcolor{black}{$N = 200$, $m=n=10$, $P_{c} = 0.7$, $P_{m}=0.1$, $P_{t} = 0.1$.}

\subsubsection{\textcolor{black}{ Runtime}}
We utilized the same running environment as previously described. \textcolor{black}{All reported data points are an average of 500 executions}. In the \emph{MIMIC-III} dataset, the average time to generate a mask for a given sample was 5.32 seconds. In the \emph{LSST} dataset, it took an average of 5.03 seconds to deliver a mask for a given sample. For the \emph{NATOPS} dataset, the average time to deliver a mask for a given sample was 7.22 seconds. Lastly, in the \emph{AE} dataset, it took an average of 10.03 seconds to generate a mask for a given sample.

\subsubsection{Results} 
The results of the real-world data experiments are depicted in Fig.~\ref{fig5} to Fig.~\ref{fig6} presented at the end of this appendix. The figures show that the top 10\% most important inputs as determined by all saliency methods. The findings demonstrate that CGS-Mask effectively identifies salient input regions and provides intuitive explanations to users. 

\textcolor{black}{Specifically, for \emph{MIMIC-III} experiment, the best fitness value progression from 0 to 100 generations (in intervals of 10) is as follows: $0.003\rightarrow 0.025\rightarrow 0.044\rightarrow 0.062\rightarrow 0.081\rightarrow 0.097\rightarrow  0.109\rightarrow 0.117\rightarrow 0.121\rightarrow  0.125\rightarrow 0.125$. This pattern clearly indicates convergence, with the value increasing and eventually stabilizing, unlike a random algorithm where the fitness value would be noisy. }

\subsubsection{Case study}
To gain a practical understanding of how the strip mask operates, we utilized the time series data of gesture changes in NATOPS as a demonstration of CGS-Mask. We selected specific data samples labeled as "I have command" based on the movement of the right hand and elbow. We applied CGS-Mask to explain four feature combinations: 1) "Hand tip right, X coordinate," 2) "Thumb right, X coordinate," 3) "Elbow right, Y coordinate," and 4) "Thumb right, Y coordinate." As depicted in Fig.~\ref{case1}, CGS-Mask effectively delineated the stages of the gesture's change. In this instance, the strip mask accurately captured the temporal continuity of the features, and its binary values proved suitable for practical utilization.

\section{Further details on pilot user study}
We conducted a survey research using an electronic questionnaire that comprised both qualitative and quantitative experiments. The questionnaire was distributed to a wide range of participants, including computer science students, primary school kids, retired individuals, and randomly selected users from the Internet. This diverse sample size ensured a comprehensive representation of different age groups and backgrounds. In total, we collected 254 valid questionnaires, further enhancing the reliability and credibility of our findings. \textcolor{black}{Additionally, we conducted a supplementary survey involving 54 participants with medical backgrounds, who are experts in their respective fields. This further validation aimed to ascertain the effectiveness of CGS-Mask in specific domains.}

\subsection{Qualitative Experiment}
In the qualitative experiment, participants were presented with a ranking question consisting of six saliency masks obtained from different methods. The masks were not labeled with specific methods to avoid bias. Participants were asked to rank the masks based on their ability to help quickly distinguish salient regions (green) from non-salient regions (red). The rankings were assigned based on participants' subjective evaluations, with higher rankings indicating masks that were more effective in fulfilling the task. Table~\ref{QualitativeResult} presents the distribution of participants' rankings for masks generated by different methods.

\begin{table}[t]
    \renewcommand{\arraystretch}{1.2}
    \setlength{\abovecaptionskip}{10pt}
    \centering
    \resizebox{0.95\columnwidth}{!}{
    \begin{tabular}{cccccccc}
        \specialrule{1pt}{0pt}{0pt}
          &Rank \#1 & Rank \#2  & Rank \#3 & Rank \#4 & Rank \#5 & Rank \#6 & Average Rank \\
        \specialrule{1pt}{0pt}{0pt}
        CGS-Mask & 168 & 37 & 16 & 5 & 5 & 23  & $\mathbf{1.86}$ \\
        Dynamask & 14 & 155 & 45 & 13 & 23 & 4  & 2.56 \\
        FO & 28 & 19 & 93 & 83 & 17 & 14  & 3.33 \\
        AFO& 8 & 14 & 12 & 45 & 99 & 76  & 4.74 \\
        FIT & 22 & 14 & 18 & 39 & 70 & 91  & 4.55 \\
        DeepLIFT & 14 & 15 & 70 & 69 & 40 & 46  & 3.96 \\
         \specialrule{1pt}{0pt}{0pt}
    \end{tabular}}
    \caption{The number of subjects selected for different rankings for masks delivered by different methods.}
    \label{QualitativeResult}
\end{table}

\subsection{Quantitative Experiment}
In the quantitative experiment, participants were presented with masks containing specific saliency scores, including binary masks (with values of 0 or 1) and numerical masks (with real values). The participants were instructed to identify the most important feature from each mask, where importance was defined as the sum of all the scores in the corresponding row. We recorded the time taken by each participant to complete each mask. A shorter time indicated that the mask facilitated faster identification of the most important features. The experiment included three questions (Q1, Q2, and Q3), with \textcolor{black}{Q2 featuring a binary mask and Q1 and Q3 featuring numerical masks.} To eliminate any potential bias caused by the order of questions, Q2 (with the binary mask) was presented as the second question. We calculated the average time taken and the average accuracy rate for answering Q1 and Q3. \textcolor{black}{The results have been presented in the main body of our paper.}

\subsection{Experiment on experts with domain knowledge}
To assess the interpretability of masks generated by various saliency methods among users with domain knowledge, we conducted a pilot user study involving 54 individuals with medical backgrounds. 
Participants were asked to evaluate the saliency masks obtained from six different methods based on their effectiveness in understanding salient features and temporal relevance. Our study revealed that 42.6\% of the users preferred CGS-Mask due to its intuitive interpretation of time series models, which was the highest preference among the six methods. The distribution of participants selecting different saliency methods is shown in Fig.~\ref{pie}.

\begin{figure*}[t]
\centering
\includegraphics[width=1.9\columnwidth]{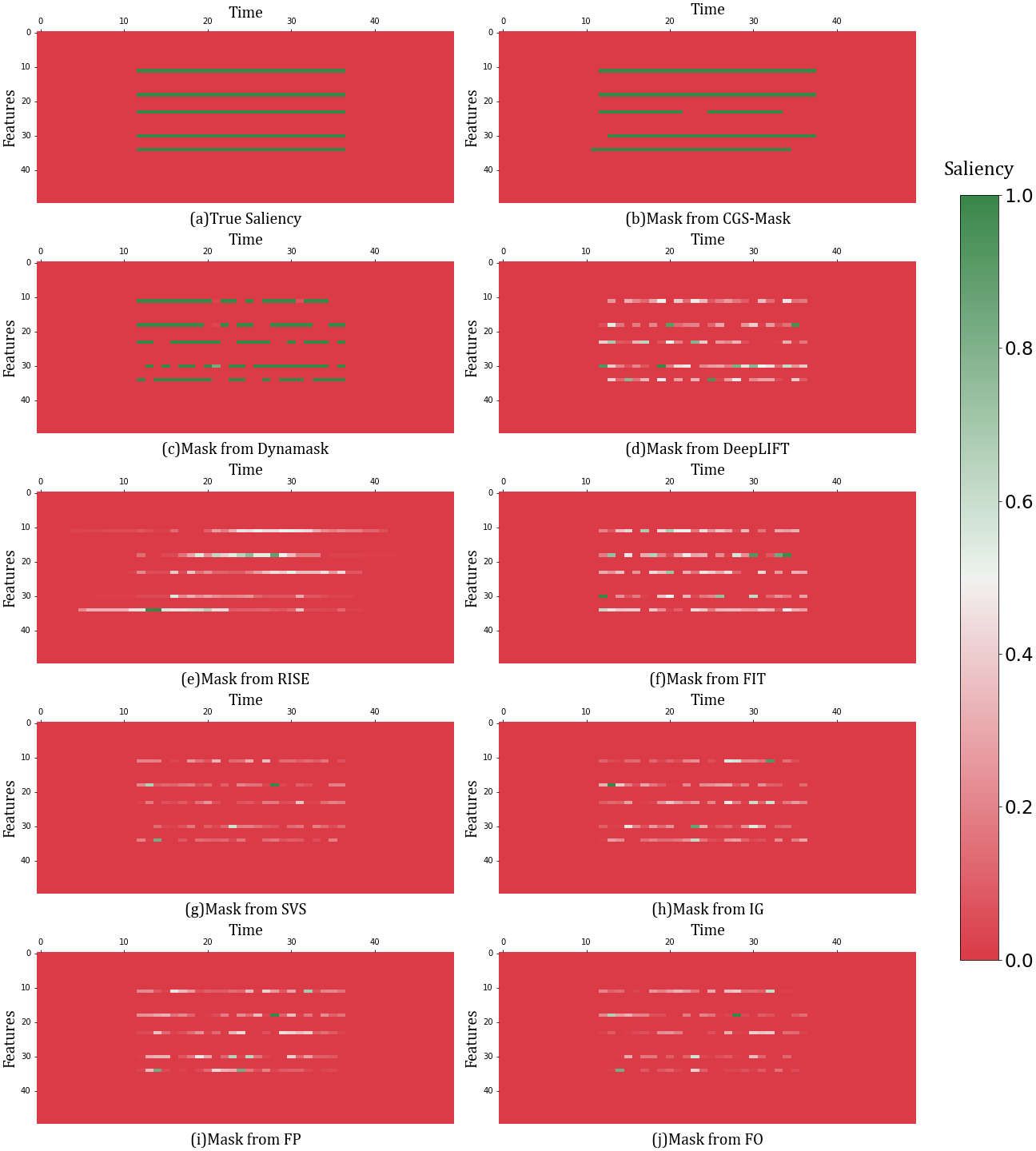} 
\caption{Nine explanation masks were used to analyze synthetic data from \emph{rare feature} experiments. The classification accuracy in these cases largely depends on the specific features of the data. For example, patients may exhibit abnormal blood pressure, respiration rate, heart rate, oxygen saturation levels, and altered mental status changes before experiencing shock. The ground truth results, as shown in Fig.~\ref{fig1}(a), serve as a reliable benchmark for evaluating the efficacy of the generated masks. Notably, among the nine explanation masks analyzed, the CGS-Mask generated an approximation that closely matched the ground truth, as shown in Figure~\ref{fig1}(b). Conversely, the masks shown in Fig.~\ref{fig1}(c)$-$(j) demonstrated either fragmented or vague results, obscuring meaningful indications and making them less intuitive for users.}
\label{fig1}
\end{figure*}

\begin{figure*}[t]
\centering
\includegraphics[width=1.9\columnwidth]{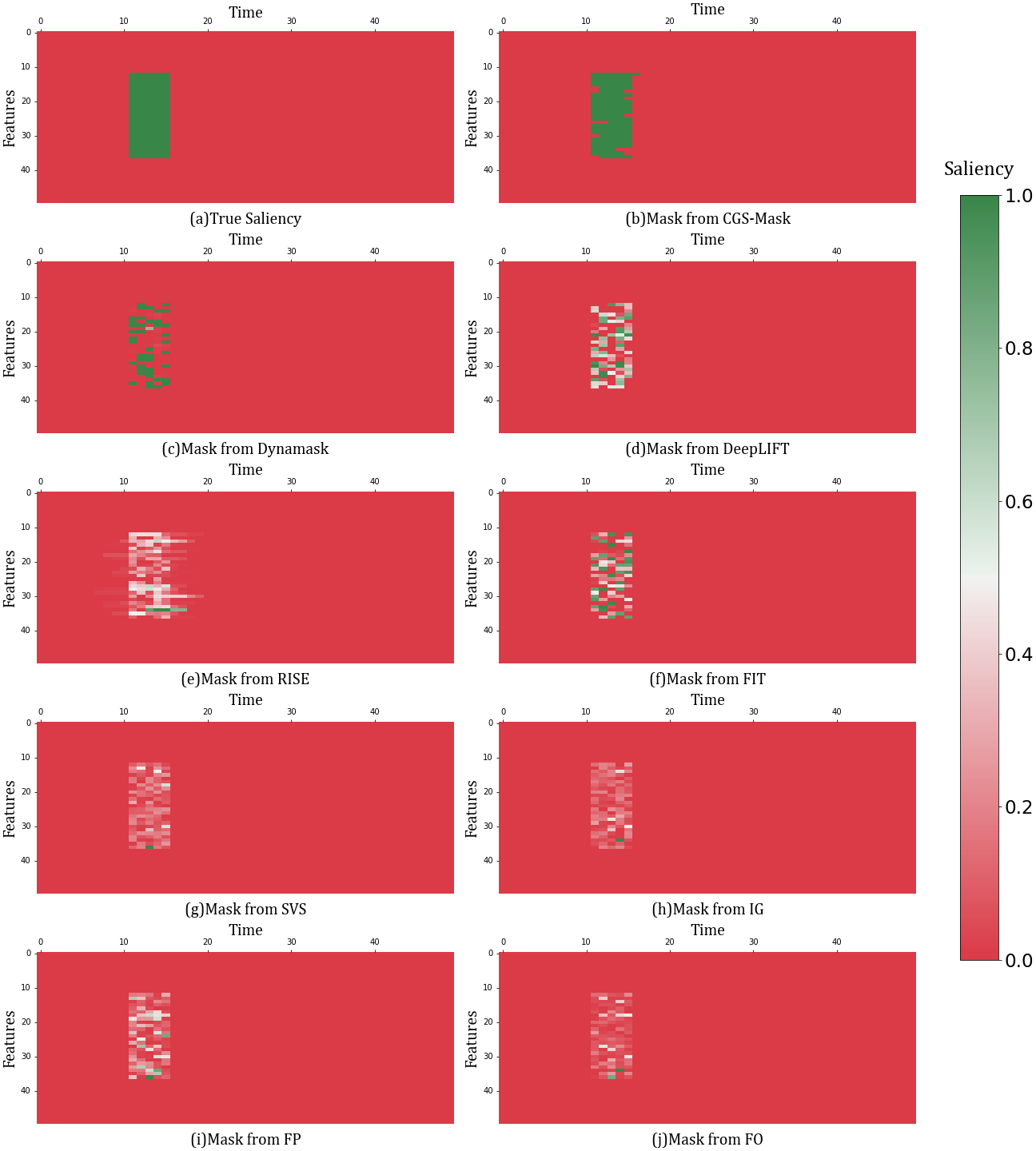} 
\caption{
Nine explanation masks were used to analyze synthetic data obtained from \emph{rare time} experiments. In this scenario, the classification of data is largely depends on the changes in each feature quantity over a specific short period. For instance, a rainstorm is typically accompanied by wind, lightning, and thunder. The ground truth outcomes, depicted in Figure~\ref{fig2}(a), serve as a reliable benchmark for evaluating the efficacy of the generated masks. Notably, among the nine explanation masks analyzed, the CGS-Mask generated an approximation that closely corresponded to the ground truth, as depicted in Figure~\ref{fig2}(b). By contrast, the masks shown in Fig.~\ref{fig2}(c)-(j) demonstrated either fragmented or vague results, obscuring meaningful indications and making them less intuitive for users.}
\label{fig2}
\end{figure*}

\begin{figure*}[t]
\centering
\includegraphics[width=1.9\columnwidth]{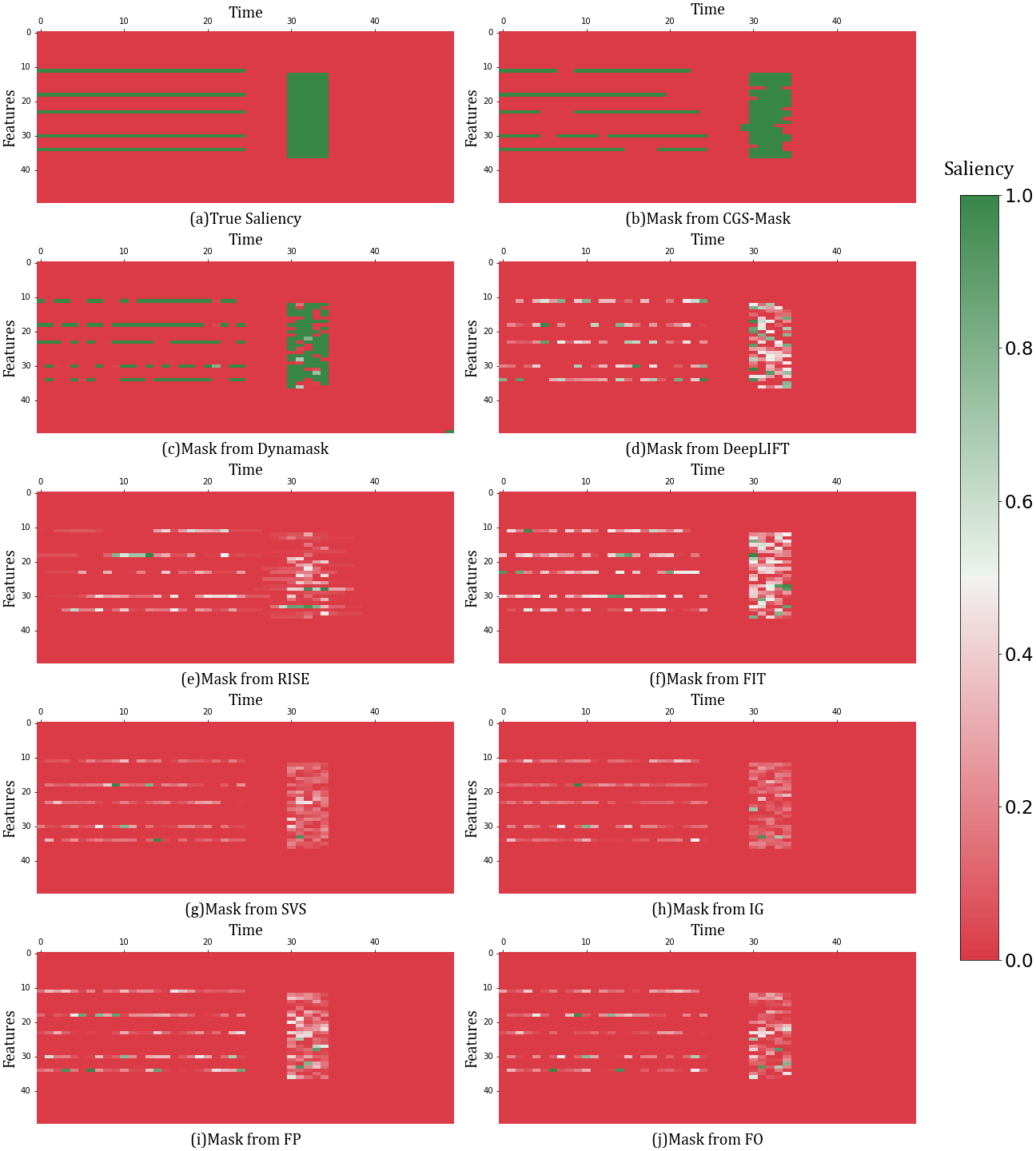} 
\caption{Nine explanation masks were used to analyze synthetic data obtained from mixture experiments. The experiments were designed to combine the cases depicted in Fig.~\ref{fig1} and Fig.~\ref{fig2}. The ground truth outcomes, as depicted in Fig.~\ref{fig3}(a), were utilized as a reliable benchmark to assess the effectiveness of the generated masks. The CGS-Mask produced an approximation that closely corresponded to the ground truth, as shown in Fig.~\ref{fig3}(b). By contrast, the masks shown in Fig.~\ref{fig3}(c)-(j) demonstrated either fragmented or vague results, obscuring meaningful indications and making them less intuitive for users.}
\label{fig3}
\end{figure*}

\begin{figure*}[t]
\centering
\includegraphics[width=1.9\columnwidth]{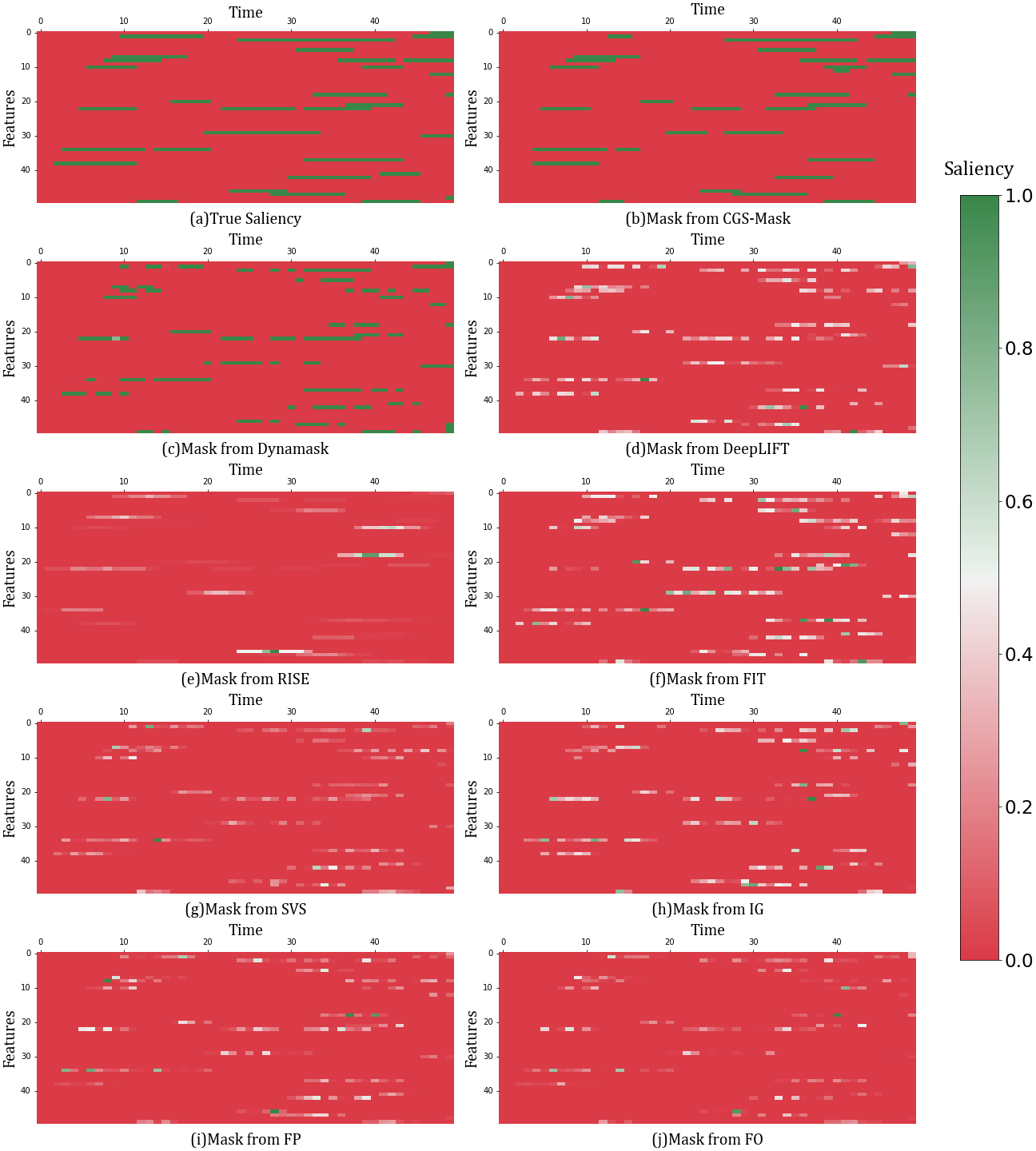} 
\caption{Nine explanation masks were employed to analyze synthetic data obtained from \emph{random} experiments. The present example pertains to a stochastic phenomenon that expounds information. It is closer to the actual application scenario, where the time series feature information, which has a crucial impact on the model, may be dispersed in different features and different periods. The ground truth outcomes, illustrated in Fig.~\ref{fig4}(a), serve as a reliable benchmark for evaluating the efficacy of the generated masks. It is noteworthy that among the nine explanation masks assessed, the CGS-Mask produced an approximation that closely corresponded to the ground truth, as depicted in Fig.~\ref{fig4}(b). In contrast, the masks shown in Fig.~\ref{fig4}(c)-(j) demonstrated either fragmented or vague results, obscuring meaningful indications and making them less intuitive for users.}
\label{fig4}
\end{figure*}

\begin{figure*}[t]
\centering
\includegraphics[width=1.9\columnwidth]{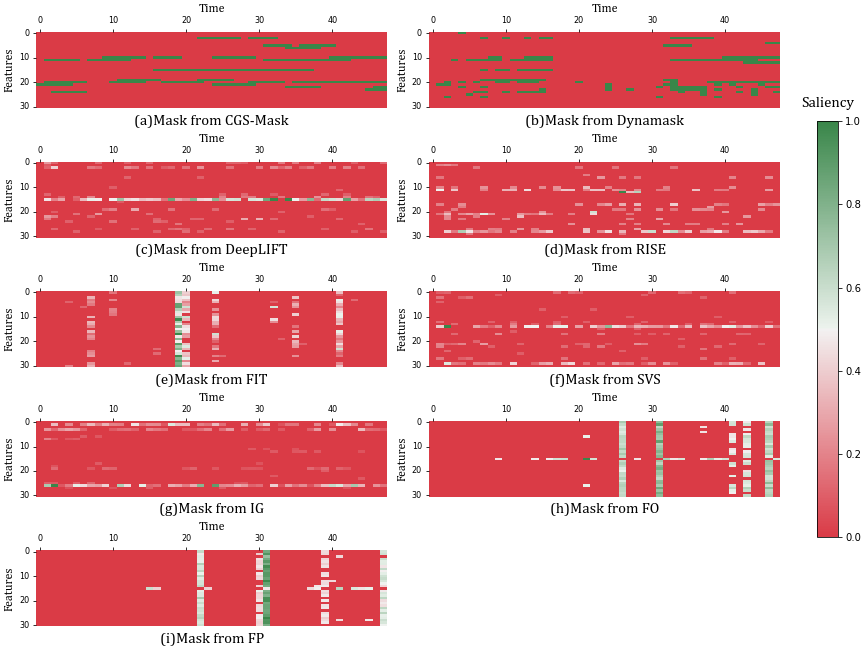} 
\caption{Nine explanation masks were employed to analyze data from \emph{MIMIC-III}, covering the 48 hours preceding a patient's death. The green strips in Fig.~\ref{fig5}(a) reveal significant features that indicate patient outcomes. Specifically, a decrease in blood pressure, tachycardia of heart rate, and rapid respiration are indicative of an impending risk of death, thereby enabling timely intervention by ICU (Intensive Care Unit) physicians. However, the remaining masks cannot distinguish the periods and features contributing to this outcome, as observed in Fig.~\ref{fig5}(c)$-$(i).}
\label{fig5}
\end{figure*}

\begin{figure*}[t]
\centering
\includegraphics[width=1.9\columnwidth]{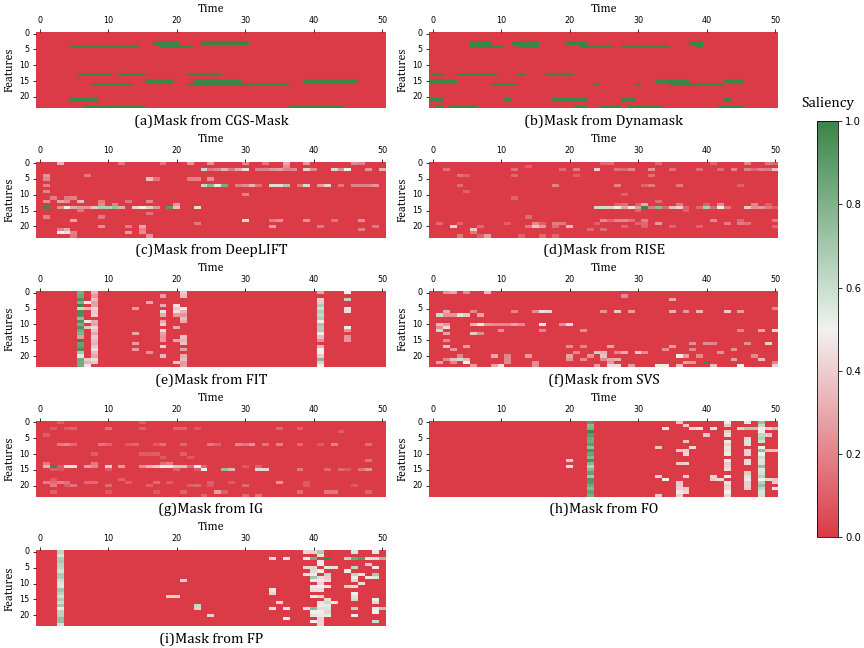} 
\caption{Nine explanation masks were employed to analyze data from the \emph{NATOPS}, explaining the classification of the movement "I have command." As is well known, a particular gesture can be accurately classified and inferred by the continuous changes of features such as "fingertip right, x-coordinate," "thumb right, x-coordinate," "elbow right, y-coordinate," and "thumb right, y-coordinate" over a certain period. CGS-Mask can clearly identify these consecutive changes, as illustrated in Fig.~\ref{fig7}(a). Compared to CGS-Mask, the other masks either fail to clearly define the essential features at which moments or fail to produce chronologically continuous results. A more concrete example is shown in Fig.~\ref{case1}.
}
\label{fig7}
\end{figure*}

\begin{figure*}[t]
\centering
\includegraphics[width=1.9\columnwidth]{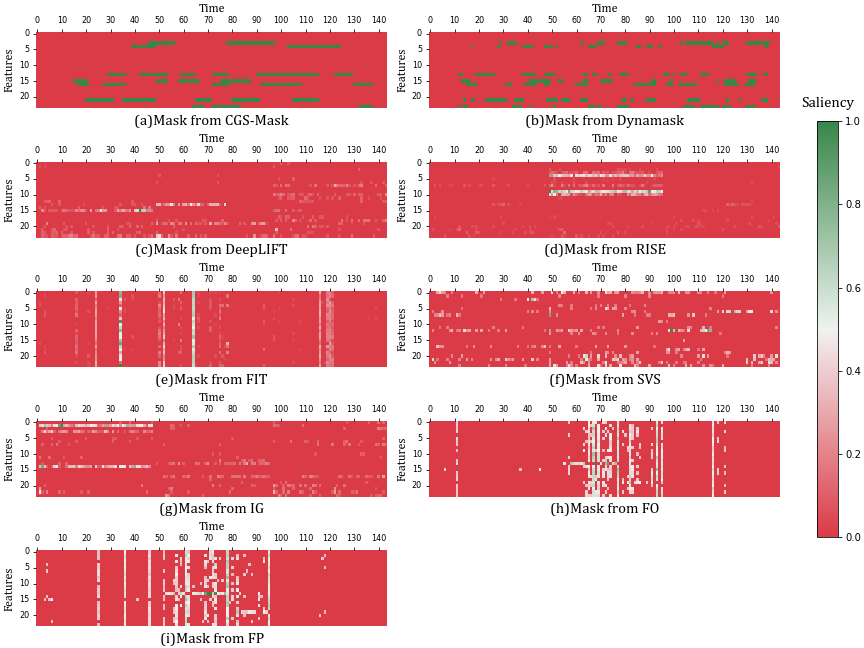} 
\caption{Nine explanation masks were used to analyze the \emph{AE (Appliances Energy)} data to understand the energy consumption of the appliances within a low-energy building. The interpretation of the data reveals that T1 (Temperature in the kitchen area), T2 (Temperature in the living room area), T6 (Temperature outside the building), and RH\_5 (Humidity in the bathroom) are the key factors that play an essential role in measuring the use of Appliances Energy. In contrast to CGS-Mask, the other masks' interpretation results fail to reveal the necessary time periods of these critical regions.}
\label{fig8}
\end{figure*}

\begin{figure*}[t]
\centering
\includegraphics[width=1.9\columnwidth]{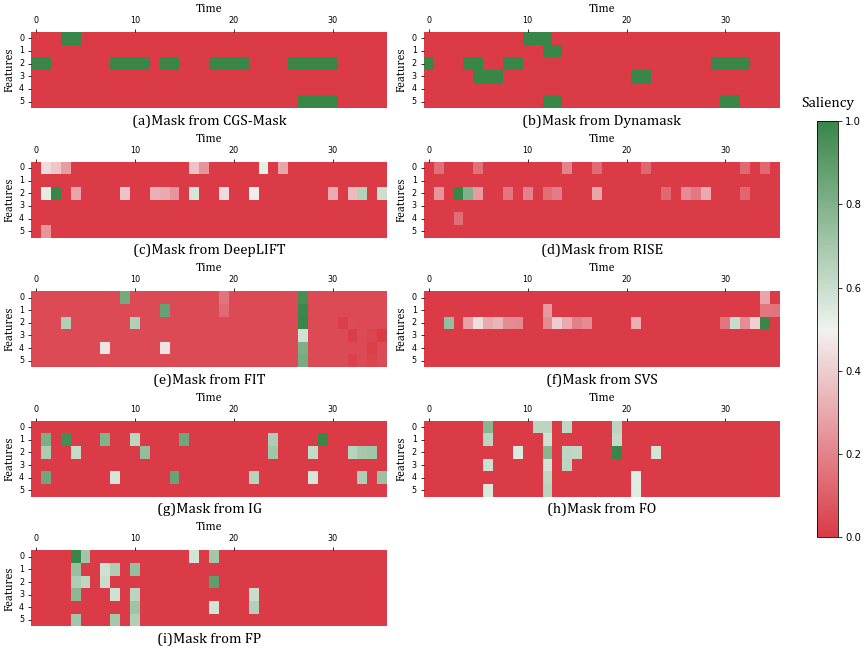} 
\caption{Nine explanation masks were employed to analyze data from the \emph{LSST} data set to give explanations for classifying astronomical sources that vary over time into different classes. Upon examining the CGS-Mask sample, it is evident that the flux feature, which measures the brightness in the band of observation, plays a crucial role in determining the class of the sample.}
\label{fig6}
\end{figure*}

\newpage

\begin{figure*}[t]
\centering
\includegraphics[width=1.9\columnwidth]{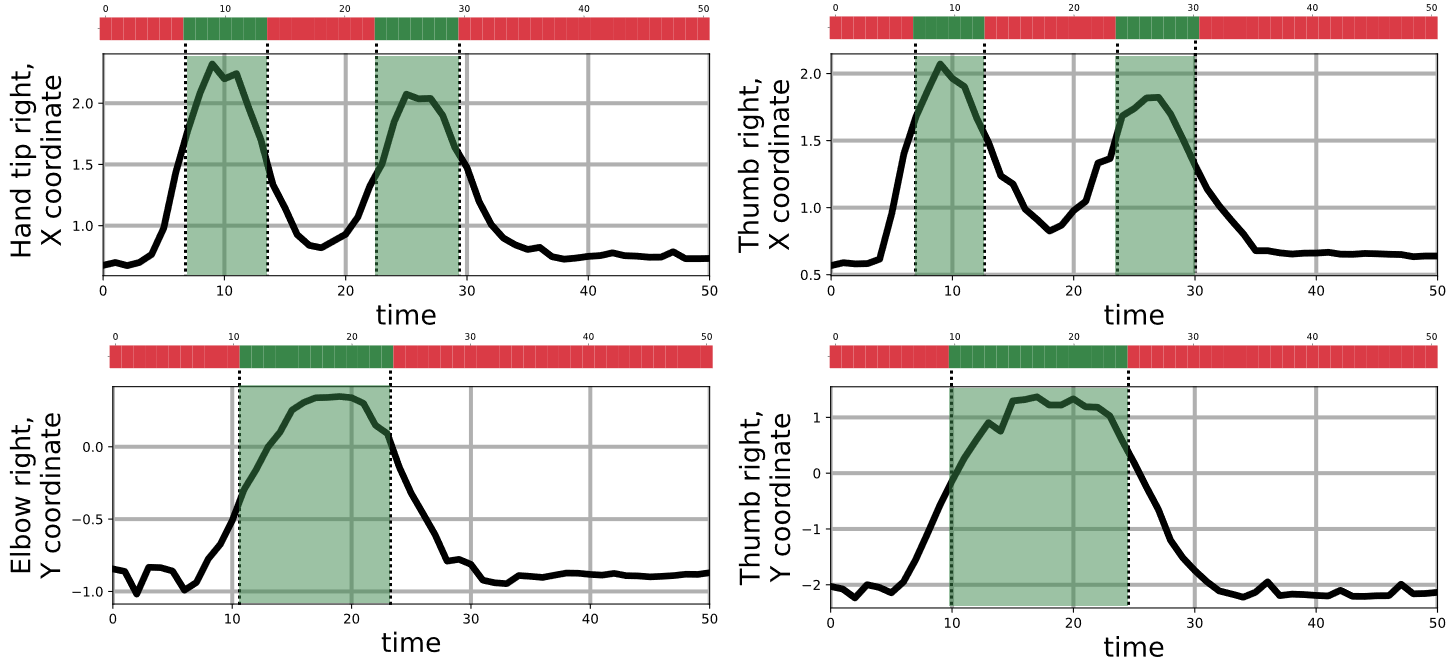} 
\caption{Case study on \emph{NATOPS} data set with four selected features. The green means the data is salient while the red means the opposite.}
\label{case1}
\end{figure*}

\begin{figure*}[t]
\centering
\includegraphics[width=1\columnwidth]{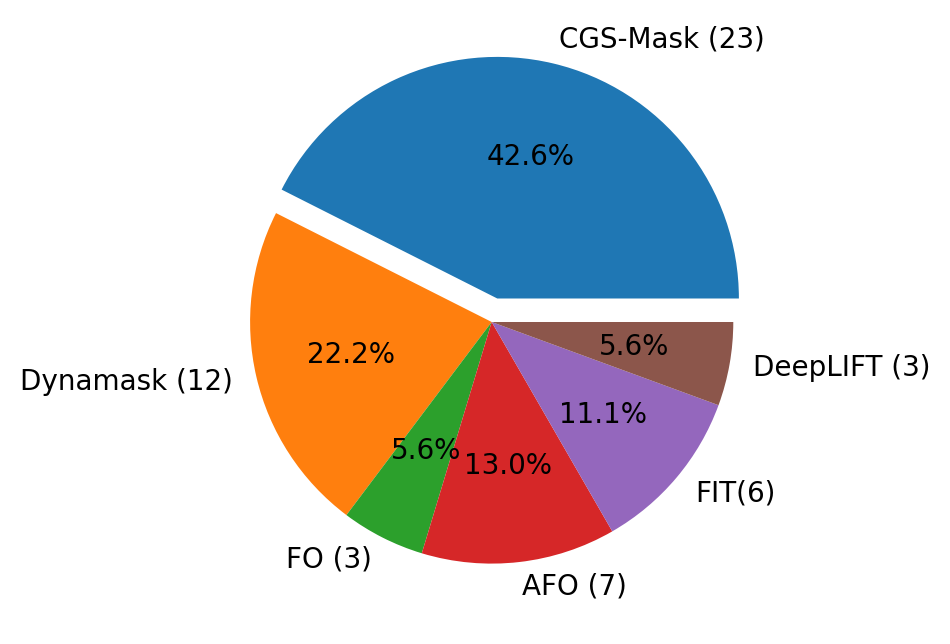} 
\caption{\textcolor{black}{Results of the pilot user study conducted with participants from a medical domain background.}}
\label{pie}
\end{figure*}

\clearpage

\end{document}